\newtheorem{theorem}{Theorem}
\newtheorem{lemma}[theorem]{Lemma}
\newtheorem{definition}[theorem]{Definition}
\newcommand{\abovestrut}[1]{\rule[0in]{0in}{#1}\ignorespaces}
\newcommand{\belowstrut}[1]{\rule[-#1]{0in}{#1}\ignorespaces}
\newcommand{\abovespace}{\abovestrut{0.20in}}
\newcommand{\belowspace}{\belowstrut{0.10in}}
\renewcommand{\cite}{\citep}
\newcommand{\vect}[1]{\ensuremath{\mathbf{#1}}}
\newcommand{\mat}[1]{\ensuremath{\mathbf{#1}}}
\newcommand{\grad}{\nabla}
\newcommand{\argmin}{\mathop{\rm argmin}}
\newcommand{\argmax}{\mathop{\rm argmax}}
\newcommand{\norm}[1]{\|{#1}\|}
\newcommand{\spanvec}[1]{\textrm{span}\left({#1}\right)}
\newcommand{\tr}{\text{tr}}
\newcommand{\trans}{^{\top}}
\newcommand{\st}{\text{s.t.~}}
\newcommand{\R}{\mathbb{R}}
\newcommand{\Wcal}{\mathcal{W}}
\newcommand{\Vcal}{\mathcal{V}}
\newcommand{\geneig}{GenELin}
\newcommand{\geneigk}{GenELinK}
\newcommand{\A}{\mat{A}}
\newcommand{\B}{\mat{B}}
\newcommand{\Binv}{\mat{B}^{-1}}
\renewcommand{\S}{\mat{S}}
\newcommand{\I}{\mat{I}}
\newcommand{\M}{\mat{M}}
\newcommand{\U}{\mat{U}}
\newcommand{\V}{\mat{V}}
\newcommand{\Vc}{\mat{V}_\perp}
\newcommand{\W}{\mat{W}}
\newcommand{\X}{\mat{X}}
\newcommand{\Y}{\mat{Y}}
\newcommand{\vb}{\vect{b}}
\renewcommand{\u}{\vect{u}}
\newcommand{\Tcal}[1]{\mathcal{T}\left(#1\right)}
\renewcommand{\v}{\vect{v}}
\newcommand{\m}{\vect{m}}
\newcommand{\w}{\vect{w}}
\newcommand{\x}{\vect{x}}
\newcommand{\y}{\vect{y}}
\newcommand{\bphi}{\vect{\phi}}
\newcommand{\bpsi}{\vect{\psi}}
\newcommand{\cn}{\kappa}
\newcommand{\nn}{\nonumber}
\newcommand{\defeq}{\stackrel{\mathrm{def}}{=}}
\newcommand{\order}[1]{O\left(#1\right)}
\newcommand{\otilde}[1]{\widetilde{O}\left(#1\right)}
\newcommand{\nnz}[1]{\mathrm{nnz}\left(#1\right)}
\newcommand{\abs}[1]{\left|#1\right|}
\newcommand{\iprod}[2]{\langle #1,#2\rangle}
\newcommand{\singmin}[1]{\sigma_{\textrm{min}}\left(#1\right)}
\newcommand{\singmax}[1]{\sigma_{\textrm{max}}\left(#1\right)}
 \newcommand{\eye}{\mat{I}}
\title{Efficient Algorithms for Large-scale Generalized Eigenvector Computation and Canonical Correlation Analysis}
\author{Rong Ge\footnote{Duke University. Email: rongge@cs.duke.edu} \and Chi Jin\footnote{UC Berkeley. Email: chijin@cs.berkeley.edu} \and Sham M. Kakade\footnote{University of Washington. Email: sham@cs.washington.edu} \and Praneeth Netrapalli\footnote{Microsoft Research New England. Email: praneeth@microsoft.com} \and Aaron Sidford\footnote{Microsoft Research New England. Email: asid@microsoft.com}}
\begin{document}
\maketitle

\begin{abstract}
This paper considers the problem of canonical-correlation
analysis (CCA)~\cite{Hotelling1936} and, more broadly, the generalized
eigenvector problem for a pair of symmetric matrices. These are two
fundamental problems in data analysis and scientific computing with
numerous applications in machine learning and statistics
\cite{ShiM2000,HardoonSS2004,WittenTH2009}.  

We provide simple iterative algorithms, with improved runtimes, for solving these problems that
are globally linearly convergent with moderate dependencies 
on the
condition numbers and eigenvalue gaps of the matrices involved. 

We obtain our results by reducing CCA to the top-$k$ generalized
eigenvector problem. 
We solve this problem through a general framework that simply requires black box access to an approximate linear system solver. 
Instantiating this framework with accelerated gradient descent we
obtain a running time of $\order{\frac{z k \sqrt{\kappa}}{\rho}
  \log(1/\epsilon) \log \left(k\kappa/\rho\right)}$ where $z$ is the
total number of nonzero entries, $\kappa$ is the
condition number 
and 
$\rho$ is the relative eigenvalue gap of the appropriate matrices. 

Our algorithm is linear in the input size and the number of components
$k$ up to a $\log(k)$ factor. This is essential for handling large-scale matrices that appear
in practice. To the best of our knowledge this is the first such
algorithm with global linear convergence. 
We hope that our results prompt further research 
and ultimately improve the practical running time for performing these important data analysis procedures on large data sets. 
\end{abstract}


\section{Introduction}

Canonical-correlation analysis (CCA) and the generalized eigenvector problem
are fundamental problems in scientific computing, data analysis, and statistics \cite{barnett1987origins,friman2001detection}.

These problems arise naturally in statistical settings.
Let  $\X, \Y \in \R^{n \times d}$ denote two large sets of data points, with empirical covariance matrices $\S_x = \frac{1}{n} \X^\top \X$, $\S_y = \frac{1}{n} \Y^\top \Y$, and $\S_{xy} = \frac{1}{n} \X^\top \Y$ and suppose we wish to find features $\x,\y \in \R^d$ that best encapsulate the similarity or dissimilarity of the data sets. CCA is the problem of maximizing the empirical correlation
\begin{equation}
\label{eq:intro-cca}
\max_{\x^\top \S_{xx} \x = 1 \text{ and } \y^\top \S_{yy} \y = 1}
\x^\top \S_{xy} \y
\end{equation}
and thereby extracts common features of the data sets. On the other hand the generalized eigenvalue problems
\[
\max_{\x \neq 0} \frac{\x^\top \S_{xx} \x}{\x^\top \S_{yy} \x}
\enspace \text{ and } \enspace
\max_{\y \neq 0} \frac{\y^\top \S_{yy} \y}{\y^\top \S_{xx} \y}
\]
compute features that maximizes discrepancies between the data sets. Both these problems are easily extended to the $k$-feature case (See Section~\ref{sec:prob}). 
Algorithms for solving them are commonly used to extract features to compare and contrast large data sets and are used commonly in regression \cite{KakadeF2007}, clustering \cite{chaudhuri2009multi}, classification \cite{KarampatziakisM2013}, word embeddings \cite{dhillon2011multi} and more.

Despite the prevalence of these problems and the breadth of research on solving them in practice (\cite{barnett1987origins,barnston1992prediction,sherry2005conducting,KarampatziakisM2013} to name a few), there are relatively few results on obtaining provably efficient algorithms. Both problems can be reduced to performing principle component analysis (PCA), albeit on complicated matrices e.g $\S_{yy}^{-1/2} \S_{xy}^\top \S_{xx}^{-1} \S_{xy} \S_{yy}^{-1/2}$ for CCA and $\S_{yy}^{-1/2} \S_{xx} \S_{yy}^{-1/2}$ for generalized eigenvector.  However applying PCA to these matrices traditionally involves the formation of  $\S_{xx}^{-1/2}$ and $\S_{yy}^{-1/2}$ which is prohibitive for sufficiently large datasets if we only want to estimate top-$k$ eigenspace.

A natural open question in this area is to what degree can the formation of $\S_{xx}^{-1/2}$ and $\S_{yy}^{-1/2}$ can be bypassed to obtain efficient scalable algorithms in the case where the number of features $k$ is much smaller than the dimensions of the problem $n$ and $d$. Can we develop simple iterative practical methods that solve this problem in close to linear time when $k$ is small and the condition number and eigenvalue gaps are bounded?
While there has been recent work on solving these problems using iterative methods \cite{avron2014efficient,paul2015core,lu2014large,deanCCA} we are unaware of previous provable global convergence results and more strongly, linearly convergent scalable algorithms.

The central goal of this paper is to answer this question in the affirmative. We present simple globally linearly convergent iterative methods that solve these problems. The running time of these problems scale well as the number of features and conditioning of the problem stay fixed and the size of the datasets grow.  Moreover, we implement the method and perform experiments demonstrating that the techniques may be effective for large scale problems. 

Specializing our results to the single feature case we show how to solve the problems all in time $O(\frac{z \sqrt{\kappa}}{\rho}\log \frac{1}{\rho} \log \frac{1}{\epsilon})$, where $\kappa$ is the maximum of condition numbers of $\S_{xx}$ and $\S_{yy}$ and $\rho$ is the eigengap of appropriate matrices and mentioned above, and $z$ is the number of nonzero entries in $\X$ and $\Y$. 
 To the best of our knowledge this is the first such globally linear convergent algorithm for solving these problems. 

We achieve our results through a general and versatile framework that allows us to utilize fast linear system solvers in various regimes. We hope that by initiating this theoretical and practical analysis of CCA and the generalized eigenvector problem we can promote further research on the problem 
and ultimately advance the state-of-the-art for efficient data analysis.


\subsection{Our Approach}

To solve the problems motivated in the previous section we first directly reduce CCA to a generalized eigenvector problem (See Section~\ref{sec:cca}). Consequently, for the majority of the paper we focus on the following:

\begin{definition}[Top-$k$ Generalized Eigenvector\footnote{We use the term \emph{generalized eigenvector} to refer to 
a non-zero vector $\v$ such that $\A \v = \lambda \B \v$ for symmetric $\A$ and $\B$, not the general notion of eigenvectors for asymmetric matrices.}]
\label{def:topk_evec_intro}
Given symmetric matrices $\A, \B $ where  $\B$ is positive definite compute 
$\w_1,\cdots,\w_k$ defined for all $i\in[k]$ by 
\begin{align*}
&\w_i \in \argmax_{\w} \abs{\w \trans \A \w}
~
\st
\begin{array}{c}
\w \trans \B \w = 1 \mbox{ and } \\
\w \trans \B \w_j = 0 \; \forall \; j \in [i-1].
\end{array} 
\end{align*}
\end{definition}

The generalized eigenvector is equivalent to the problem of computing the PCA of $\A$ in the $\B$ norm. Consequently, it is the same as computing the top $k$ eigenvectors of largest absolute value of the symmetric matrix $\M = \B^{-1/2} \A \B^{-1/2}$ and then multiplying by $\B^{-1/2}$. 

Unfortunately, as we have discussed, explicitly computing $\B^{-1/2}$ is prohibitively expensive when $n$ is large and therefore we wish to avoid forming $\M$ explicitly. One natural approach is to develop an iterative methods to approximately apply $\B^{-1/2}$ to a vector and then use that method as a subroutine to perform the power method on $\M$. Even if we could perform the error analysis to make this work, such an approach would likely require at least a suboptimal $\Omega(\log^2(1/\epsilon))$ iterations to achieve error $\epsilon$. 

To bypass these difficulties, we take a closer look at the power method. For some initial vector $\x$, let $\y = \B^{-1/2} \M^{i} \x$ be the result of $i$ iterations of power method on $\M$ followed by multiplying $\B^{-1/2}$. Clearly $\y = (\B^{-1} \A)^i \B^{-1/2} \x$. Furthermore, since we typically initialize the power method by a random vector and since $\B$ is positive definite, if we instead we computed $\y = (\B^{-1} \A)^i \x$ for random $\x$ we would likely converge at the same rate as the power method at the cost of just a slightly worse initialization quality.

Consequently, we can compute our desired eigenvectors by simply alternating between applying $\A$ and $\B^{-1}$ to a random initial vector. Unfortunately, computing $\B^{-1}$ exactly is again outside our computational budget. At best we should only attempt to apply $\B^{-1}$ approximately by linear system solvers. 

One of our main technical contributions is to argue about the effect of inexact solvers in this method. 
Whereas solving every linear system to target accuracy $\epsilon$ would again require $O(\log(1/\epsilon))$ time per linear system, which leads to a sub-optimal $O(\log^2(1/\epsilon))$ overall running time, i.e. sublinear convergence, we instead show how to warm start the linear system solvers and obtain a faster rate. 
We exploit the fact that as we perform many iterations of power methods, points at time $t$ converge to eigenvectors and therefore we can initialize our linear system solver at time $t$ carefully using our points at time $t-1$. Ultimately we show that we only need to make fixed multiplicative progress in solving the linear system in every iteration of the power method, thus the runtime for solving each linear system is independent of $\epsilon$.

Putting these pieces together with careful error analysis yields our main result. Our algorithm only requires the ability to apply $\A$ to a vector and an approximate linear system solver for $\B$, which in turn can be obtained by just applying $\B$ to vectors. Consequently, our framework is versatile, scalable, and easily adaptable to take advantage of faster linear system solvers. 





\subsection{Previous Work}

While there has been limited previous work on provably solving CCA and generalized eigenvectors, we note that there is an impressive body of literature on performing PCA\cite{rokhlin2009randomized,halko2011finding,musco2015stronger,garber2015fast,jin2015fast} and solving positive semidefinite linear systems\cite{hestenes1952methods,nesterov1983method,spielman2004nearly}. Our analysis in this paper draws on this work extensively and our results should be viewed as the principled application of them to the generalized eigenvector problem. 

There has been much recent interest in designing scalable algorithms
for CCA\cite{deanCCA, karenCCA1,karenCCA2,karenCCA3}.  To our
knowledge, there are no provable guarantees for approximate methods for
this problem.
Heuristic-based approachs \cite{WittenTH2009, lu2014large} compute efficiently, 
but only give suboptimal result due to coarse approximation.
The work in \cite{deanCCA} provides one natural iterative procedure,
where the per iterate computational complexity is low. This work only
provides local convergence guarantees and does not provide guarantees
of global convergence. 

Also of note is that many recent algorithms~\cite{deanCCA, karenCCA1}
have mini-batch variations, but there's no guarantees for mini-batch style algorithm for CCA yet. Our algorithm can also be easily extends to a mini-batch version. While we do not explicitly analyze this variation, and we believe our analysis and techniques are helpful for extensions to this setting. 
We also view this as an important direction for future work.


We hope that by establishing the generalized eigenvector problem and providing provable guarantees under moderate regularity assumptions that our results may be further improved and ultimately this may advance the state-of-the-art in practical algorithms for performing data analysis. 

\subsection{Our Results}
Our main result in this paper is a linearly convergent algorithm for computing the top generalized eigenvectors (see Definition~\ref{def:topk_evec_intro}). In order to be able to state our results we introduce some notation. Let $\lambda_1,\cdots,\lambda_d$ be the eigenvalues of $\Binv \A$ (their existence is guaranteed by Lemma~\ref{lem:eig-sing} in the appendix). The eigengap $\rho \defeq 1 - \frac{\abs{\lambda_{k+1}}}{\abs{\lambda_{k}}}$ and $\gamma \defeq \frac{\abs{\lambda_1}}{\abs{\lambda_k}}$. Let $z$ denote the number of nonzero entries in $\A$ and $\B$.
\begin{theorem}[Informal version of Theorem~\ref{thm:main_k}]
	Given two matrices $\A$ and $\B \in \R^{d\times d}$, there is an algorithm that computes the top-$k$ generalized eigenvectors up to an error $\epsilon$ in time $\widetilde{O}(\frac{z k \sqrt{\cn\left(\B\right)}}{\rho} \log \frac{1}{\epsilon})$,
	where $\cn\left(\B\right)$ is the condition number of $\B$ and $\otilde{\cdot}$ hides logarithmic terms in $d$, $\gamma$, $\cn\left(\B\right)$ and $\rho$, and nothing else.
\end{theorem}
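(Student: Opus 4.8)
The plan is to realize the claimed algorithm as an \emph{inexact, warm-started block power method} on $\Binv\A$ and to control how the errors of the linear-system solves propagate. I would start from the structural lemma (Lemma~\ref{lem:eig-sing}) that $\Binv\A=\B^{-1/2}\M\B^{1/2}$ with $\M=\B^{-1/2}\A\B^{-1/2}$ symmetric, having eigenvalues $\lambda_1,\dots,\lambda_d$ ordered by absolute value and an eigenbasis which, pulled back through $\B^{-1/2}$, is $\B$-orthonormal and solves Definition~\ref{def:topk_evec_intro}. Since $(\Binv\A)^2=\B^{-1/2}\M^2\B^{1/2}$, running simultaneous iteration $\X_t=(\Binv\A)^2\X_{t-1}$ from a Gaussian $\X_0\in\R^{d\times k}$, with a $\B$-inner-product Gram--Schmidt step after each iteration (which needs only matrix--vector products with $\B$), is conjugate through the change of variables $\x\mapsto\B^{1/2}\x$ to ordinary block power method on the PSD matrix $\M^2$; squaring handles eigenvalues of both signs and changes the relative gap only from $\rho$ to $\rho(2-\rho)\ge\rho$. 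Because $\B\succ0$, $\B^{1/2}\X_0$ has full support, so with high probability its overlap with the top-$k$ eigenspace of $\M^2$ is $1/\mathrm{poly}(d,\cn(\B))$; the standard power-method analysis then gives that after $T=O(\tfrac1\rho\log\tfrac{d\gamma\cn(\B)}{\rho\epsilon})$ iterations the $\B$-orthonormalized column span of $\X_T$ is within $\epsilon$ of the top-$k$ generalized eigenspace (principal angles measured in the $\B$-norm), and the signed generalized eigenvalues are recovered from the Rayleigh quotients $\w^\top\A\w$.

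Next I would replace the two exact applications of $\Binv$ inside each step by an approximate solver, modeled as: given the system $\B\y=\A\x$ and a warm start $\y^{(0)}$, return $\hat\y$ with $\|\hat\y-\Binv\A\x\|_\B\le c\,\|\y^{(0)}-\Binv\A\x\|_\B$, which accelerated gradient descent on the quadratic $\tfrac12\y^\top\B\y-\y^\top\A\x$ achieves in $O(\sqrt{\cn(\B)}\log\tfrac1c)$ iterations of cost $O(z)$ each. A perturbation analysis of the conjugated power method on $\M^2$ shows that if, at step $t$, every per-column solve error is brought below $\tfrac{\rho}{8}$ times the current subspace error $\epsilon_{t-1}$, then $\epsilon_t\le(1-\tfrac{\rho}{2})\epsilon_{t-1}$, so the inexactness only inflates $T$ by a constant.

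The reason the per-step solver cost is independent of $\epsilon$ is that the target accuracy $\tfrac\rho8\epsilon_{t-1}$ is relative to a quantity already in hand: after $\B$-normalization $\x_{t-1}=\hat\y_{t-1}/\|\hat\y_{t-1}\|_\B$, so the desired $\Binv\A\x_{t-1}$ differs from the previous scaled iterate $\hat\y_{t-1}$ --- our warm start --- by at most $(\|\Binv\A\|_\B+1)$ times the distance of $\x_{t-1}$ to the top-$k$ eigenspace, i.e.\ by $\mathrm{poly}(\gamma)\cdot\epsilon_{t-1}$ once the iteration has begun to converge. Thus the warm-started solver only has to shrink its error by a factor $\mathrm{poly}(\gamma/\rho)$, taking $O(\sqrt{\cn(\B)}\log(\gamma/\rho))=\otilde{\sqrt{\cn(\B)}}$ AGD iterations; a separate crude argument handles the first $O(\tfrac1\rho\log\tfrac{d\gamma\cn(\B)}{\rho})$ burn-in steps, where no warm start is available but only $\mathrm{poly}(d,\gamma,\cn(\B),1/\rho)$-accurate solves are needed, again $\otilde{\sqrt{\cn(\B)}}$ AGD steps.

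Assembling the cost, each power iteration uses $O(k)$ applications of $\A$ and $O(k)$ warm-started solves, i.e.\ $\otilde{zk\sqrt{\cn(\B)}}$ work, plus lower-order terms for the $\B$-Gram--Schmidt; over $T=O(\tfrac1\rho\log\tfrac{d\gamma\cn(\B)}{\rho\epsilon})$ iterations this is $\otilde{\tfrac{zk\sqrt{\cn(\B)}}{\rho}\log\tfrac1\epsilon}$, with $\otilde{\cdot}$ absorbing $\log d$, $\log\gamma$, $\log\cn(\B)$, and $\log(1/\rho)$. \textbf{The main obstacle} is the coupled analysis behind the last two paragraphs: simultaneously controlling the block power-method subspace error in the $\B$-geometry under inexact, warm-started solves \emph{and} the evolution of the warm-start quality, while making sure that neither the random-initialization overlap nor the passage between the $\B$-norm errors used in the eigenvector analysis and the $\ell_2$ errors natural for AGD on $\B$ smuggles extra polynomial factors of $\cn(\B)$ or $\gamma$ into the final running time.
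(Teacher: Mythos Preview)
Your overall architecture matches the paper's: inexact block power iteration on $\Binv\A$, analyzed via $\tan\theta_t$ in the $\B$-geometry (equivalently, conjugated to ordinary power on $\M=\B^{-1/2}\A\B^{-1/2}$), with warm-started approximate solves so that each solve needs only a fixed accuracy ratio independent of~$\epsilon$. The gap is in the warm start you actually propose. You warm start the solve for $\Binv\A\x_{t-1}$ with $\hat\y_{t-1}$, but $\hat\y_{t-1}=\|\hat\y_{t-1}\|_\B\cdot\x_{t-1}$ is a \emph{positive} scalar multiple of $\x_{t-1}$, and since you square, $\|\hat\y_{t-1}\|_\B\approx\lambda_1^2$ near convergence while the target is $\Binv\A\x_{t-1}\approx\lambda_1\x_{t-1}$; the warm-start error is $\approx|\lambda_1|\,|\lambda_1-1|$, which does not shrink with $\epsilon_{t-1}$, so your claimed bound $(\|\Binv\A\|_\B+1)\epsilon_{t-1}$ is false. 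In the top-$k$ case the situation is worse: Gram--Schmidt right-multiplies by an arbitrary invertible $k\times k$ matrix, so no scalar---nor the previous raw iterate---can simultaneously align all $k$ columns with $\Binv\A\W_t$.

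The paper's fix, which is really the crux of the argument, is to warm start with $\W_t\Gamma_t$ where $\Gamma_t=(\W_t^\top\B\W_t)^{-1}\W_t^\top\A\W_t$ is the $k\times k$ Rayleigh-quotient matrix (in rank one, $\beta_t=\w_t^\top\A\w_t$). This is the minimizer over all $\Gamma\in\R^{k\times k}$ of $\|\W_t\Gamma-\Binv\A\W_t\|_{\B,F}$, and the paper bounds its value by comparing to the non-computable choice $\hat\Gamma_t=(\V^\top\B\W_t)^{-1}\Lambda_\V(\V^\top\B\W_t)$, which kills the $\V$-component exactly and leaves a $\Vc$-part of norm at most $2\sqrt{k}\,|\lambda_1|\tan\theta_t$. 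Against the required per-step accuracy $\tfrac{|\lambda_k|-|\lambda_{k+1}|}{4}\min\{\sin\theta_t,\cos\theta_t\}$, the shrinkage ratio is $O(\rho^2/(k\gamma^2))$, which is exactly what produces the $\log(k\gamma/\rho)$ hidden in $\otilde{\cdot}$. The signed Rayleigh quotient also handles eigenvalue signs directly, so the paper does not square. With this warm start in place, the rest of your plan---the two-phase (burn-in/convergence) accounting, the AGD cost per solve, and the random-initialization bound on $\cos\theta_0$---goes through essentially as the paper does it.
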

Here is a comparison of our result with previous work.
\begin{table}[h!]
	\caption{Runtime Comparison - Generalized Eigenvectors}
	\begin{center}
		\begin{small}
			\begin{sc}
			{\renewcommand{\arraystretch}{1.5}
				\begin{tabular}{|c|c|}
					\hline
					\abovespace\belowspace
					\geneigk (this paper) &	$\widetilde{O}(\frac{d^2 k \sqrt{\cn\left(\B\right)}}{\rho} \log \frac{1}{\epsilon})$ \\
					\hline
					Fast matrix inversion	& $O(d^{2.373...})$\\
					\hline
				\end{tabular}
				}
			\end{sc}
		\end{small}
	\end{center}
\end{table}

Turning to the problem of CCA, cf.~\eqref{eq:intro-cca}, the relevant parameters are $\kappa \defeq \max\left(\cn\left(\S_{xx}\right),\cn\left(\S_{yy}\right)\right)$ i.e., the maximum of the condition numbers of $\S_{xx}$ and $\S_{yy}$, $\gamma \defeq \frac{\abs{\lambda_1}}{\abs{\lambda_k}}$ where $\lambda_1,\cdots,\lambda_k$ are the eigenvalues of $\S_{yy}^{-1}\S_{yx}\S_{xx}^{-1}\S_{xy}$ in decreasing absolute value. Let $z$ denote the number of nonzeros in $\X$ and $\Y$.
Our main results are a reduction from CCA to the generalized eigenvector problem.
\begin{theorem}[Informal version of Theorem~\ref{thm:main_cca}]\label{thm:main_cca_informal}
	Given two data matrices $\X \in \R^{d_1 \times n}$ and $\Y \in \R^{d_2 \times n}$, there is an algorithm that performs top-$k$ CCA up to an error $\epsilon$ in time $\widetilde{O}(\frac{zk \sqrt{\cn}}{\rho} \log \frac{1}{\epsilon})$,
	where $d = d_1+d_2$ and $\otilde{\cdot}$ hides logarithmic terms in $d$, $\gamma$, $\cn$ and $\rho$, and nothing else.
\end{theorem}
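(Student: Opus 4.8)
\section*{Proof proposal for Theorem~\ref{thm:main_cca_informal}}

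The plan is to realize top-$k$ CCA as a top generalized eigenvector problem for an explicit pair of block matrices and then invoke Theorem~\ref{thm:main_k} (the formal version of the generalized eigenvector result), being careful that all matrix--vector access is implemented through $\X$ and $\Y$ rather than through the covariance matrices.

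First I would set, on $\R^d$ with $d = d_1 + d_2$,
\[
\A \defeq \begin{pmatrix} 0 & \S_{xy} \\ \S_{yx} & 0 \end{pmatrix}, \qquad \B \defeq \begin{pmatrix} \S_{xx} & 0 \\ 0 & \S_{yy} \end{pmatrix}.
\]
These are symmetric, and $\B$ is positive definite whenever $\S_{xx},\S_{yy}$ are (full-rank data; otherwise one regularizes). Writing a generalized eigenvector as $\w = (\x;\y)$, the equation $\A\w = \lambda\B\w$ unfolds to $\S_{xy}\y = \lambda\S_{xx}\x$ and $\S_{yx}\x = \lambda\S_{yy}\y$, whence $\S_{xx}^{-1}\S_{xy}\S_{yy}^{-1}\S_{yx}\x = \lambda^2\x$. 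So the nonzero eigenvalues of $\Binv\A$ are exactly $\pm\sqrt{\mu_i}$, where $\mu_1 \ge \mu_2 \ge \cdots$ are the eigenvalues of $\S_{yy}^{-1}\S_{yx}\S_{xx}^{-1}\S_{xy}$ (the squared canonical correlations), and the corresponding generalized eigenvectors are the stacked pairs of canonical directions. I would then check that, after the standard rescaling that forces $\x\trans\S_{xx}\x = \y\trans\S_{yy}\y$ at any optimum, the $\B$-orthogonality constraints of Definition~\ref{def:topk_evec_intro} coincide with the two normalization constraints of~\eqref{eq:intro-cca}; so recovering the top-$k$ canonical pairs amounts to recovering the top generalized eigenvectors of $(\A,\B)$, the one wrinkle being that each canonical correlation $\sqrt{\mu_i}$ appears as a $\pm$ pair, so one runs the routine for $2k$ components and keeps the $k$ outputs with $\widehat\w\trans\A\widehat\w > 0$.

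Next I would track the parameters. Since $\B$ is block diagonal, $\cn(\B) = \max(\cn(\S_{xx}),\cn(\S_{yy})) = \cn$. The relative gap of Theorem~\ref{thm:main_k} applied to $(\A,\B)$ at level $2k$ is $1 - \sqrt{\mu_{k+1}}/\sqrt{\mu_k} = 1 - \sqrt{1-\rho} \ge \rho/2$, and $\gamma$ becomes $\sqrt\gamma$; both changes are absorbed by $\otilde{\cdot}$, and the $2k$ is still $\otilde{k}$. The crucial point is the $\nnz$ bookkeeping: $\A$ and $\B$ must never be formed, since $\nnz{\S_{xx}}$ can be $\Theta(d^2)$. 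Instead, every operation of the algorithm behind Theorem~\ref{thm:main_k} is a multiplication by $\A$ or a call to an approximate solver for $\B$. Multiplication by $\A$ reduces to $\x \mapsto \tfrac1n\Y(\X\trans\x)$ and $\y \mapsto \tfrac1n\X(\Y\trans\y)$, each costing $\order{\nnz{\X}+\nnz{\Y}} = \order{z}$; and because $\B$ is block diagonal, a $\B$-solve decouples into $\S_{xx}$- and $\S_{yy}$-solves, for which accelerated gradient descent needs only products $\v \mapsto \tfrac1n\X\trans(\X\v)$, again $\order{z}$ per step. Feeding these structured oracles into the framework of Theorem~\ref{thm:main_k} therefore gives running time $\otilde{\frac{z k \sqrt{\cn}}{\rho}\log\frac1\epsilon}$. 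Finally I would translate the output guarantee: Theorem~\ref{thm:main_k} returns $\B$-orthonormal vectors approximating the top generalized eigenvectors; normalizing each returned $\widehat\w_i = (\widehat\x_i;\widehat\y_i)$ so that $\widehat\x_i\trans\S_{xx}\widehat\x_i = \widehat\y_i\trans\S_{yy}\widehat\y_i = 1$ and propagating the approximation bound shows the recovered correlations $\widehat\x_i\trans\S_{xy}\widehat\y_i$ are within $\epsilon$ of $\sqrt{\mu_i}$ once the target accuracy is set to a suitable $\poly(\epsilon,\rho,1/\gamma)$, which only costs extra logarithmic factors.

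The main obstacle I anticipate is this last step: cleanly converting the (subspace-level) guarantee of Theorem~\ref{thm:main_k} into a per-direction CCA guarantee in the presence of the $\pm\sqrt{\mu_i}$ sign pairing and of possibly clustered canonical correlations. This requires a careful choice of the internal target accuracy and a perturbation argument relating principal angles between $\B$-orthonormal bases to errors in the bilinear form $\x\trans\S_{xy}\y$. A secondary, more mechanical obstacle is making fully rigorous the replacement of the generic $\nnz$ in Theorem~\ref{thm:main_k} by the cost of the structured matrix--vector products --- i.e.\ verifying that neither that algorithm nor its inner solver ever touches the entries of $\A$ or $\B$ directly, so that the block-diagonal $\B$ and the factored $\A$ are exploited throughout.
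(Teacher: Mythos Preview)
Your reduction to the block generalized eigenvector problem, the parameter tracking ($\cn(\B)=\cn$, the gap at level $2k$, the $\nnz$ bookkeeping via factored matrix--vector products) all match the paper exactly; this is precisely how Theorem~\ref{thm:main_cca} is obtained from Theorem~\ref{thm:main_k}.

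The divergence, and the gap in your proposal, is the extraction step. The output of \geneigk{} is only a $\B$-orthonormal \emph{basis} $\bar{\W}\in\R^{d\times 2k}$ for the top-$2k$ generalized eigenspace, with guarantee on $\sin\theta(\bar{\W},\V)$; its columns are not approximate individual eigenvectors, so ``keep the $k$ columns with $\widehat\w\trans\A\widehat\w>0$'' is not well defined as written. (You could repair this by a Rayleigh--Ritz step inside the computed $2k$-space, where the positive and negative blocks are separated by a gap $2\lambda_k$, and only then select by sign; but that is an additional argument you would have to supply.) The paper takes a different route: split $\bar{\W}$ into its top $d_1$ rows $\bar{\W}_x$ and bottom $d_2$ rows $\bar{\W}_y$, multiply each by a fresh $2k\times k$ Gaussian matrix $\U$, and Gram--Schmidt in the $\S_{xx}$- and $\S_{yy}$-norms respectively (Algorithm~\ref{algo:appro-cca}). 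Lemma~\ref{lem:ccatoindividual} then shows that, with probability $1-\zeta$, the resulting $\W_x,\W_y$ have principal angle $O(k^2\theta/\zeta^2)$ to $\spanvec{\bphi_i}$ and $\spanvec{\bpsi_i}$; this is what produces the $\zeta$-dependent log factors in Theorem~\ref{thm:main_cca}. The paper's guarantee is thus on the canonical \emph{subspaces}, not on individual correlation values as in your last paragraph.

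In short: your approach would work with an added Rayleigh--Ritz step and a per-direction perturbation argument, and would yield individual directions (at the cost of assuming, or arguing around, well-separated $\lambda_i$ within the top $k$). The paper's random-projection extraction avoids any such internal separation assumption and delivers a subspace guarantee directly, paying only a polynomial-in-$k$ and $1/\zeta$ blowup that is absorbed into $\otilde{\cdot}$.
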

Table~\ref{tab:CCA} compares our result with existing results. \footnote{\cite{deanCCA} only shows local convergence for S-AppGrad. Starting within this radius of convergence requires us to already solve the problem to a high accuracy.}
\begin{table}[h!]
	\caption{Runtime Comparison - CCA}
	\label{tab:CCA}
	\begin{center}
		\begin{small}
			\begin{sc}
			{\renewcommand{\arraystretch}{1.5}
				\begin{tabular}{|c|c|}
					\hline
					\abovespace\belowspace
					This paper &	$\widetilde{O}(\frac{nd k \sqrt{\cn}}{\rho} \log \frac{1}{\epsilon})$ \\
					\hline
					S-AppGrad~\cite{deanCCA} & $\widetilde{O}(\frac{nd k \cn }{\rho^2}\log \frac{1}{\epsilon})$\\
					\hline
					Fast matrix inversion	& $O(n d^{1.373...})$\\
					\hline
				\end{tabular}
				}
			\end{sc}
		\end{small}
	\end{center}
\end{table}

We should note that the actual bounds we obtain are somewhat stronger than the above informal bounds. Some of the terms in logarithm also appear only as additive terms. Finally we also give natural stochastic extensions of our algorithms where the cost of each iteration may be much smaller than the input size. The key idea behind our approach is to use an approximate linear system solver as a black box inside power method on an appropriate matrix. We show that this dependence on a linear system solver is in some sense essential. In Section~\ref{sec:reduction} we show that the generalized eigenvector problem is strictly more general than the problem of solving positive semidefinite linear systems and consequently our dependence on the condition number of $\B$ is in some cases optimal.

\begin{table}[h!]
	\caption{Runtime Comparison - CCA with Different Linear Solver\protect\footnotemark}
	\label{tab:CCA_solver}
	\begin{center}
		\begin{small}
			\begin{sc}
			{\renewcommand{\arraystretch}{1.5}
				\begin{tabular}{|c|c|}
					\hline
					\abovespace\belowspace
					Our result + GD &	$\widetilde{O}(\frac{nd k \cn}{\rho} \log \frac{1}{\epsilon})$ \\
					\hline
					Our result + AGD &	$\widetilde{O}(\frac{nd k \sqrt{\cn}}{\rho} \log \frac{1}{\epsilon})$ \\
					\hline
					Our result + SVRG &	$\widetilde{O}(\frac{d k(n+\tilde{\cn})}{\rho} \log \frac{1}{\epsilon})$ \\
					\hline
					Our result + ASVRG & $\widetilde{O}(\frac{d k (n+\sqrt{n\tilde{\cn}})}{\rho} \log \frac{1}{\epsilon})$ \\
					\hline
				\end{tabular}
				}
			\end{sc}
		\end{small}
	\end{center}
\end{table}
\footnotetext{This table was inspired by \cite{wang2016globally} in order to facilitate comparison to existing work.}

Subsequent to the submission of this paper,  we learned of the
closely related work in \cite{wang2016globally}, which presents a number of
additional interesting results.
We think it is worthwhile to point out that our algorithm only requires black box access to any linear solver. Although the result in Theorem \ref{thm:main_cca_informal} was stated by instantiating the linear system solver by accelerated gradient descent (AGD), it is immediate to apply Theorem \ref{thm:main_cca} and give the corresponding rates if we instantiate it by other popular algorithms, including gradient descent (GD), stochastic variance reduction (SVRG) \cite{johnson2013accelerating}, and its accelerated version (ASVRG) \cite{frostig2015regularizing,lin2015catalyst}.
We summarize the corresponding runtime in Table \ref{tab:CCA_solver}.
There $\tilde{\cn} \defeq \max\left(\frac{\max_i \norm{\x_i}^2}{\sigma_{\min}(\S_{xx})}
,\frac{\max_i \norm{\y_i}^2}{\sigma_{\min}(\S_{yy})}\right) $, and $\x_i$, $\y_i$ are $i$-th column of matrix $\X$ and $\Y$.  Note by definition we always have $\tilde{\cn} \ge \cn$.
For generalized eigenvector problem, results of similar flavor as in Table \ref{tab:CCA_solver} can also be easily derived.

Finally, we also run experiments to demonstrate the practical effectiveness of our algorithm on both small and large scale datasets.

\subsection{Paper Overview}

In Section~\ref{sec:notation}, we present our notation. In Section~\ref{sec:prob}, we formally define the problems we solve and their relevant parameters. In Section~\ref{sec:our_results}, we present our results for the generalized eigenvector problem. In Section~\ref{sec:cca}, we present our results for the CCA problem. In Section~\ref{sec:reduction} we argue that generalized eigenvector computation is as hard as linear system solving and that our dependence on $\cn(\B)$ is near optimal. In Section~\ref{sec:exps}, we present experimental results of our algorithms on some real world data sets. Due to space limitations, proofs are deferred to the appendix.

\section{Notation}\label{sec:notation}
We use bold capital letters $(\A,\B,\cdots)$ to denote matrices and bold lowercase letters $(\u,\v,\cdots)$ for vectors. For symmetric positive semidefinite (PSD) matrix $\B$, we let $\norm{\u}_{\B}  \defeq \sqrt{\u \trans \B \u}$ denote the $\B$-norm of $u$ and we let $\iprod{\u}{\v}_{\B}  \defeq \u \trans \B \v$ denotes the inner product of $\u$ and $\v$ in the $\B$-norm. We say that a matrix $\W$ is $\B$-orthonormal if $\W\trans \B \W=\eye$. We let $\sigma_i(\A)$ denotes the $i^{\textrm{th}}$ largest singular value of $\A$, $\singmin{\A}$ and $\singmax{\A}$ denote the smallest and largest singular values of $\A$ respectively. Similarly we let $\lambda_i(\A)$ refers to the $i^{\textrm{th}}$ largest eigenvalue of $\A$ in magnitude. We let $\nnz{\A}$ denotes the number of nonzeros in $\A$. We also let $\cn(\B)$ denote the condition number of $\B$ (i.e., the ratio of the largest to smallest eigenvalue).


\section{Problem Statement}\label{sec:prob}
In this section, we recall the generalized eigenvalue problem, define our error metric, and introduce all relevant parameters.
Recall that the generalized eigenvalue problem is to find $k$ vectors $\w_i$, $i\in [k]$ such that
\begin{align*}
	&\w_i \in \argmax_{\w} \abs{\w \trans \A \w} \quad
	\st
	\begin{array}{c}
		\w \trans \B \w = 1 \mbox{ and } \\
		\w \trans \B \w_j = 0 \; \forall \; j \in [i-1].
	\end{array}
\end{align*}
Using stationarity conditions, it can be shown that the vectors $\w_i$ are given by $\w_i=\v_i$, where $v_i$ is an eigenvector of $\Binv \A$ with eigenvalue $\lambda_i$ such that $\abs{\lambda_1} \geq \cdots \geq \lambda_n$.
Our goal is to recover the top-k eigen space i.e., $\textrm{span}\{\v_1, \cdots, \v_k\}$. In order to quantify the error in estimating the eigenspace, we use largest principal angle, which is a standard notion of distance between subspaces~\cite{golub2012matrix}.
\begin{definition}[Largest principal angle]\label{def:lp_angle}
	Let $\Wcal$ and $\Vcal$ be two $k$ dimensional subspaces, $\W$ and $\V$ their $\B$-orthonormal basis respectively. The largest principal angle $\theta\left(\Wcal,\Vcal\right)$ in the $\B$-norm is defined to be
	\begin{align*}
	\theta\left(\Wcal,\Vcal\right) \defeq \arccos\left( \singmin{\V\trans \B \W}\right),
	\end{align*}
\end{definition}
Intuitively, the largest principal angle corresponds to the largest angle between any vector in the span of $\W$ and its projection onto the span of $\V$. In the special case where $k=1$, the above definition reduces to our choice in the top-$1$ setting. Given two matrices $\W$ and $\V$, we use $\theta\left(\W,\V\right)$ to denote the largest principle angle between the subspaces spanned by the columns of $\W$ and $\V$. We say that $\W$ achieves an error of $\epsilon$ if $\W\trans \B\W = \eye$ and $\sin \theta\left(\W,\V\right) \leq \epsilon$, where $\V$ is the $d\times k$ matrix whose columns are $\v_1,\cdots,\v_k$. The relevant parameters for us are the eigengap, i.e. the relative difference between $k^{\textrm{th}}$ and $(k+1)^{\textrm{th}}$ eigenvalues, $\rho \defeq 1-\frac{\abs{\lambda_{k+1}}}{\abs{\lambda_k}}$, and $\cn(\B)$, the condition number of $\B$.

\section{Our Results}
\label{sec:our_results}

In this section, we provide our algorithms and results for solving the generalized eigenvector problem. We present our results for the special case of computing the top generalized eigenvector (Section~\ref{sub:results-top-1}) followed by the general case of computing the top-$k$ generalized eigenvectors (Section~\ref{sec:algo-k}). However, first we formally define a linear system solver as follows:

\textbf{Linear system solver}:
In each of our main results (Theorems~\ref{thm:main} and~\ref{thm:main_k}) we assume black box access to an approximate linear system solver. Given a PSD matrix $\B$, a vector $\vb$, an initial estimate $\u_0$, and an error parameter $\delta$, we require to decrease the error by a multiplicative $\delta$, i.e. output $\u_1$ with $\norm{\u_1 - \Binv \vb}_{\B}^2 \leq \delta \norm{\u_0 - \Binv \vb}_{\B}^2$. We let $\Tcal{\delta}$ denote the time needed for this operation. Since the error metric $\norm{\u_1 - \Binv\vb}_{\B}^2$ is equivalent to function error on minimizing the convex quadratic $f(\u) \defeq \frac{1}{2} \u \trans \B \u - \u \trans \vb$ up to constant scaling, an approximate linear system solver is equivalent to an optimization algorithm for $f(\u)$. We also specialize our results using Nesterov's accelerated gradient descent to state our bounds. Stating our results using linear system solver as a blackbox allows the user to choose an efficient solver depending on the structure of $\B$ and helps pass any improvements in linear system solvers on to the problem of generalized eigenvectors.

\subsection{Top-1 Setting}
\label{sub:results-top-1}
Our algorithm for computing the top generalized eigenvector, \geneig~is given in Algorithm~\ref{algo:appro-power}.
\begin{algorithm}[h!]
	\caption{\textbf{Gen}eralized \textbf{E}igenvector via \textbf{Lin}ear System Solver (GenELin)} \label{algo:appro-power}
	\begin{algorithmic}
		\renewcommand{\algorithmicrequire}{\textbf{Input: }}
		\renewcommand{\algorithmicensure}{\textbf{Output: }}
		\REQUIRE $T$, symmetric matrix $\A$, PSD matrix $\B$.
		\ENSURE  top generalized eigenvector $\w$.
		\STATE $\tilde{\w}_0 \leftarrow$ sample uniformly from unit sphere in $\R^d$
		\STATE $\w_0 \leftarrow {\tilde{\w}}_0/\norm{\tilde{\w}_0}_\B$
		\FOR{$t = 0,\cdots,T-1$}
		\STATE $\beta_t \leftarrow  \w_t\trans \A \w_t  / \w_t\trans \B \w_t$
		\STATE $\tilde{\w}_{t+1} \leftarrow \argmin_{\w \in \R^d} [\frac{1}{2}\w\trans\B\w - \w\trans\A\w_t]$\\ \quad\quad\quad\{Use an optimization subroutine \\
		 \quad\quad\quad~with initialization $\beta_t \w_t$ \}
		\STATE $\w_{t+1} \leftarrow \tilde{\w}_{t+1}/\norm{\tilde{\w}_{t+1}}_\B$
		\ENDFOR
		\STATE \textbf{Return} $\w_T$.
	\end{algorithmic}
\end{algorithm}

The algorithm implements an approximate power method where each iteration consists of approximately multiplying a vector by $\Binv \A$. In order to do this, \geneig~solves a linear system in $\B$ and then scales the resulting vector to have unit $\B$-norm. Our main result states that given an oracle for solving the linear systems,\footnote{For example, we could use Nesterov's accelerated gradient descent, Algorithm~\ref{algo:AGD}} the number of iterations taken by Algorithm~\ref{algo:appro-power} to compute the top eigenvector up to an accuracy of $\epsilon$ is at most $\frac{4}{\rho} \log \frac{1}{\epsilon \cos \theta_0 }$ where $\theta_0 \defeq \theta\left(\w_0,\v_1\right)$.
\begin{theorem} \label{thm:main}
	Recall that the linear system solver takes time $\Tcal{\delta}$ to reduce the error by a factor $\delta$. Given matrices $\A$ and $\B$, 
	\geneig~(Algorithm~\ref{algo:appro-power}) computes a vector $\w_T$ achieving an error of $\epsilon$	in $T= \frac{2}{\rho} \log \frac{1}{\epsilon \cos \theta_0}$ iterations, where $\theta_0 \defeq \theta\left(\w_0,\v_1\right)$.
	The running time of the algorithm is at most 
	\begin{align*}
	&O\left(\frac{1}{\rho} \left( \log\frac{1}{\cos \theta_0} \cdot  \Tcal{\frac{\rho^2 \cos^2 \theta_0 }{16}}+\log \frac{1}{\epsilon} \cdot \Tcal{\frac{\rho^2}{16}} \right)+ \frac{1}{\rho} (\nnz{\A}+\nnz{\B}+d) \log \frac{1}{\epsilon \cos \theta_0}\right) .
	\end{align*}
	Furthermore, if we use Nesterov's accelerated gradient descent (Algorithm~\ref{algo:AGD}) to solve the linear systems in Algorithm~\ref{algo:appro-power}, the time can be bounded as
	\begin{align*}
	&O\left( \frac{ \nnz{\B} \sqrt{\cn(\B)}}{\rho} \left( \log\frac{1}{\cos \theta_0} \log \frac{1}{\rho \cos \theta_0}
	+ \log \frac{1}{\epsilon} \log \frac{1}{\rho} \right) 
	+ \frac{1}{\rho} \nnz{\A} \log \frac{1}{\epsilon \cos \theta_0}\right).
	\end{align*}
\end{theorem}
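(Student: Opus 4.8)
The plan is to analyze Algorithm~\ref{algo:appro-power} as an inexact power method on the matrix $\M \defeq \Binv\A$, tracking the largest principal angle $\theta_t \defeq \theta(\w_t, \v_1)$ between the current iterate and the true top generalized eigenvector. First I would set up the clean (exact) power method: if $\tilde\w_{t+1} = \M\w_t$ exactly, then in the $\B$-norm geometry — where $\M$ is self-adjoint since $\M = \B^{-1/2}(\B^{-1/2}\A\B^{-1/2})\B^{-1/2}$ — one iteration contracts $\tan\theta_t$ by a factor $|\lambda_{k+1}|/|\lambda_k| = 1-\rho$, the standard power-method bound. So after $T$ exact steps, $\tan\theta_T \le (1-\rho)^T \tan\theta_0$, and to reach $\sin\theta_T \le \epsilon$ it suffices to take $T = \frac{1}{\rho}\log\frac{\tan\theta_0}{\epsilon} \le \frac{2}{\rho}\log\frac{1}{\epsilon\cos\theta_0}$ (roughly), which matches the claimed iteration count up to the rearrangement $\tan\theta_0 = \sin\theta_0/\cos\theta_0 \le 1/\cos\theta_0$.

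Next I would control the inexactness. The subroutine returns $\tilde\w_{t+1}$ with $\norm{\tilde\w_{t+1} - \M\w_t}_\B^2 \le \delta_t \norm{\beta_t\w_t - \M\w_t}_\B^2$, where $\beta_t = \w_t\trans\A\w_t / \w_t\trans\B\w_t$ is the Rayleigh quotient used as warm start. The key quantitative point is that $\beta_t\w_t$ is already a good approximation to $\M\w_t$ when $\w_t$ is close to $\v_1$: decomposing $\w_t$ into its $\v_1$-component and the orthogonal complement (in $\B$-geometry), one shows $\norm{\beta_t\w_t - \M\w_t}_\B^2 \le C\sin^2\theta_t \cdot \norm{\M\w_t}_\B^2$ for an absolute constant, because $\beta_t$ is exactly the right scaling along $\v_1$ and the error lives entirely in the small perpendicular part. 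Hence a multiplicative reduction by $\delta_t$ yields an \emph{additive} error on $\M\w_t$ of size $O(\sqrt{\delta_t}\,\sin\theta_t)\,\norm{\M\w_t}_\B$. Feeding this into a perturbed power-method step (a $\sin\theta$-perturbation analysis: if $\tilde\w_{t+1}=\M\w_t + e$ with $\norm{e}_\B \le \eta\norm{\M\w_t}_\B$, then $\tan\theta_{t+1} \le (1-\rho)\tan\theta_t + O(\eta/\cos\theta_t)$ or similar), I would choose $\delta_t$ so that the perturbation term is dominated by, say, half the contraction: taking $\delta_t = \Theta(\rho^2)$ — more precisely $\delta_t = \rho^2\cos^2\theta_0/16$ in the early ``burn-in'' phase while $\cos\theta_t$ may only be as large as $\cos\theta_0$, and $\delta_t = \rho^2/16$ once $\theta_t$ has shrunk past a constant — suffices to retain linear convergence $\sin\theta_{t+1} \le (1-\rho/2)\sin\theta_t$ roughly. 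This two-phase choice of $\delta_t$ is exactly what produces the two terms $\Tcal{\rho^2\cos^2\theta_0/16}$ and $\Tcal{\rho^2/16}$ in the runtime, with the first incurred only $O(\frac1\rho\log\frac{1}{\cos\theta_0})$ times (to boost the correlation from $\cos\theta_0$ to a constant) and the second incurred $O(\frac1\rho\log\frac1\epsilon)$ times.

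For the running time, I would then just multiply: each of the $O(\frac1\rho\log\frac{1}{\epsilon\cos\theta_0})$ iterations costs one matrix-vector product with $\A$ (cost $O(\nnz{\A})$), one vector scaling / $\B$-norm computation (cost $O(\nnz{\B}+d)$), plus one linear-solve call of cost $\Tcal{\delta_t}$ with $\delta_t$ as above, giving the first displayed bound. For the second displayed bound I would instantiate $\Tcal\delta$ with accelerated gradient descent on the quadratic $f(\u) = \frac12\u\trans\B\u - \u\trans\A\w_t$: since $f$ has condition number $\cn(\B)$, AGD achieves error-reduction factor $\delta$ in $O(\sqrt{\cn(\B)}\log\frac1\delta)$ iterations, each costing $O(\nnz{\B})$; substituting $\delta = \rho^2\cos^2\theta_0/16$ (so $\log\frac1\delta = O(\log\frac{1}{\rho\cos\theta_0})$) for the $O(\frac1\rho\log\frac1{\cos\theta_0})$ burn-in steps and $\delta=\rho^2/16$ (so $\log\frac1\delta=O(\log\frac1\rho)$) for the remaining steps, and adding the $O(\frac1\rho\nnz{\A}\log\frac1{\epsilon\cos\theta_0})$ cost of the $\A$-products, yields the second bound.

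The main obstacle I anticipate is the perturbation analysis of the power method step in the $\B$-geometry: one must carefully track how a \emph{relative} error in the solve translates through the normalization step $\w_{t+1} = \tilde\w_{t+1}/\norm{\tilde\w_{t+1}}_\B$ into an error on $\sin\theta_{t+1}$, ensuring the constants work out so that $\delta_t = \Theta(\rho^2)$ (rather than something $\epsilon$-dependent) genuinely suffices — this is the crux of getting linear rather than only sublinear ($\log^2\frac1\epsilon$) convergence. A secondary subtlety is verifying that the Rayleigh-quotient warm start $\beta_t\w_t$ really does have squared residual $O(\sin^2\theta_t)\norm{\M\w_t}_\B^2$ uniformly, including at $t=0$ where $\sin\theta_0$ is only bounded by a constant (with high probability over the random start, $\cos\theta_0 \ge \Omega(1/\sqrt d)$), which is what forces the $\cos\theta_0$ dependence into $\delta_0$ and hence the $\log\frac{1}{\rho\cos\theta_0}$ factor.
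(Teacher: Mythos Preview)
Your proposal is correct and follows essentially the same route as the paper: track $\tan\theta_t$ under an additive $\B$-norm perturbation $\xi$ of $\Binv\A\w_t$, bound the warm-start residual $\norm{\beta_t\w_t - \Binv\A\w_t}_\B$ by $O(|\lambda_1|\sin\theta_t)$, and split into a burn-in phase (requiring $\delta = \Theta(\rho^2\cos^2\theta_0)$) and a convergence phase (requiring only $\delta = \Theta(\rho^2)$). The one place the paper's argument is cleaner than your sketch is the warm-start bound: rather than arguing that ``$\beta_t$ is exactly the right scaling along $\v_1$'' (it is not --- $\beta_t$ is the Rayleigh quotient, not $\lambda_1$), the paper uses that $\beta_t = \arg\min_\beta f(\beta\w_t)$, so $\norm{\beta_t\w_t - \Binv\A\w_t}_\B^2 \le \norm{\lambda_1\w_t - \Binv\A\w_t}_\B^2 = \sum_{i\ge 2}(\lambda_1-\lambda_i)^2(\w_t^\top\B\v_i)^2 \le O(\lambda_1^2\sin^2\theta_t)$, which combined with the requirement $\norm{\xi}_\B \le \tfrac{|\lambda_1|-|\lambda_2|}{4}\min\{\cos\theta_t,\sin\theta_t\}$ gives exactly the ratio $\epsilon_{\mathrm{des}}/\epsilon_{\mathrm{init}} \le \tfrac{\rho^2}{16}\min\{1,\cot^2\theta_t\}$ that produces the two-phase $\delta$.
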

\textbf{Remarks}:
\begin{itemize}
	\item	Since \geneig~chooses $\w_0$ randomly, Lemma~\ref{lem:randinit} tells us that $\cos\theta_0 \geq \frac{\zeta}{\sqrt{d\cn\left(\B\right)}}$ with probability greater than $1-\zeta$.
	\item	Note that \geneig~exploits the sparsity of input matrices since we only need to apply them as operators.
	\item	Depending on computational restrictions, we can also use a subset of samples in each iteration of \geneig. In some large scale learning applications using minibatches of data in each iteration helps make the method scalable while still maintaining the quality of performance.
\end{itemize}
\subsection{Top-k Setting} \label{sec:algo-k}
In this section, we give an extension of our algorithm and result for computing the top-$k$ generalized eigenvectors. Our algorithm, \geneigk~is formally given as Algorithm~\ref{algo:appro-ortho}.
\begin{algorithm}[h!]
	\caption{\textbf{Gen}eralized \textbf{E}igenvectors via \textbf{Lin}ear System Solvers-\textbf{K} (\geneigk). }
	\begin{algorithmic}
		\renewcommand{\algorithmicrequire}{\textbf{Input: }}
		\renewcommand{\algorithmicensure}{\textbf{Output: }}
		\REQUIRE $T$, $k$, symmetric matrix $\A$, PSD matrix $\B$. \\
		a subroutine $\textrm{GS}_\B(\cdot)$ that performs Gram-Schmidt process, with inner product $\langle \cdot, \cdot \rangle_{\B}$.
		\ENSURE  top $k$ eigen-space $\W \in \R^{d \times k}$.
		\STATE $\tilde{\W}_0\leftarrow $ random $d \times k$ matrix with each entry i.i.d from $\mathcal{N}(0,1)$
		\STATE $\W_0 \leftarrow \text{GS}_{\B}(\tilde{\W}_0)$.
		\FOR{$t = 0,\cdots,T-1$}
		\STATE $\Gamma_t \leftarrow  (\W_t\trans \B \W_t)^{-1} (\W_t \trans \A \W_t)$
		\STATE $\tilde{\W}_{t+1} \leftarrow \argmin_{\W} \tr(\frac{1}{2}\W\trans\B\W - \W\trans\A\W_t) $ \\
		\quad\quad\quad\{Use an optimization subroutine\\
		\quad\quad\quad~with initialization $\W_t\Gamma_t$ \}
		\STATE $\W_{t+1} \leftarrow \text{GS}_{\B}(\tilde{\W}_{t+1})$
		\ENDFOR
		\STATE \textbf{Return} $\W_T$.
	\end{algorithmic}
	\label{algo:appro-ortho}
\end{algorithm}

\geneigk~is a natural generalization of \geneig~from the previous section. Given an initial set of vectors $\W_0$, the algorithm proceeds by doing approximate orthogonal iteration. Each iteration involves solving $k$ independent linear systems\footnote{Similarly, as before, we could use Nesterov's accelerated gradient descent, i.e. Algorithm~\ref{algo:AGD}.} and orthonormalizing the iterates. The following theorem is the main result of our paper which gives runtime bounds for Algorithm~\ref{algo:appro-ortho}. As before, we assume access to a blackbox linear system solver and also give a result instantiating the theorem with Nesterov's accelerated gradient descent algorithm.
\begin{theorem} \label{thm:main_k}
	Suppose the linear system solver takes time $\Tcal{\delta}$ to reduce the error by a factor $\delta$. Given input matrices $\A$ and $\B$, \geneigk~computes a $d\times k$ matrix $\W_T$ which is an estimate of the top generalized eigenvectors $\V$ with an error of $\epsilon$ i.e., $\W_T \trans \B \W_T = \eye$ and $\sin \theta_T \leq \epsilon$, where $\theta_T \defeq \theta\left(\W_T, \V\right)$ in $T= \frac{2}{\rho} \log \frac{1}{\epsilon \cos\theta_0}$ iterations where $\theta_0 = \theta\left(\W_0,\V\right)$.
	The run time of this algorithm is at most
	\begin{align*}
	&O\left(
	\frac{1}{\rho} \left( \log\frac{1}{\cos \theta_0} \cdot  \Tcal{\frac{\rho^2 \cos^4 \theta_0}{64 k \gamma^2 }} +  \Tcal{\frac{\rho^2}{64 k \gamma^2}} \log \frac{1}{\epsilon}  \right) + \frac{1}{\rho} \left(\nnz{\A} k+\nnz{\B}k + dk^2\right) \log\frac{1}{\epsilon \cos \theta_0}\right) ,
	\end{align*}
	where $\gamma \defeq \frac{|\lambda_1|}{|\lambda_k|}$, $\abs{\lambda_1} \geq \cdots \geq \abs{\lambda_k}$ being the top-$k$ eigenvalues of $\Binv \A$.
	Furthermore, if we use Nesterov's accelerated gradient descent (Algorithm~\ref{algo:AGD}) to solve the linear systems in Algorithm~\ref{algo:appro-ortho}, the time above can be bounded as
	\begin{align*}
	&O\left( \frac{ \nnz{\B}k \sqrt{\cn(\B)}}{\rho} \left( \log\frac{1}{\cos \theta_0} \log \frac{k\gamma}{\rho\cos \theta_0} 
	+\log \frac{1}{\epsilon} \log \frac{k\gamma}{\rho} \right) 
	+ \frac{\left(\nnz{\A}k+dk^2\right)}{\rho} \log\frac{1}{\epsilon \cos \theta_0} \right).
	\end{align*}
\end{theorem}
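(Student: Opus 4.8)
The plan is to analyze \geneigk~as an inexact orthogonal iteration on the matrix $\M = \B^{-1/2}\A\B^{-1/2}$ after a change of variables. First I would set $\U_t \defeq \B^{1/2}\W_t$, so that $\B$-orthonormality of $\W_t$ becomes ordinary orthonormality of $\U_t$, the quantity $\theta(\W_t,\V)$ becomes the ordinary principal angle between $\mathrm{span}(\U_t)$ and the top-$k$ eigenspace of $\M$, and an \emph{exact} step $\tilde\W_{t+1} = \B^{-1}\A\W_t$ corresponds exactly to $\tilde\U_{t+1} = \M\U_t$. So in the idealized (exact-solve) world the algorithm is literally orthogonal iteration on $\M$, whose eigenvalue gap is $\rho$; the classical analysis then gives $\tan\theta_{t+1} \le (|\lambda_{k+1}|/|\lambda_k|)\tan\theta_t = (1-\rho)\tan\theta_t$, so $T = \frac{2}{\rho}\log\frac{1}{\epsilon\cos\theta_0}$ iterations suffice (using $\log(1/(1-\rho)) \ge \rho$ and $\tan\theta_0 \le 1/\cos\theta_0$). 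This is where the iteration count in the statement comes from, and it should be essentially a citation to a standard orthogonal-iteration lemma, or a short self-contained argument via the $\sin$/$\cos$ of principal angles.

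The real work is the perturbation analysis: in \geneigk~we do not compute $\B^{-1}\A\W_t$ exactly, but rather run the linear-system solver to reduce the $\B$-norm error by a factor $\delta$ starting from the warm start $\W_t\Gamma_t$. The key structural observation — the one that makes the warm start pay off — is that $\W_t\Gamma_t$ is already a good approximation to $\B^{-1}\A\W_t$ when $\W_t$ is close to the eigenspace: writing $\B^{-1}\A\W_t = \W_t\Gamma_t + (\text{error})$, the error is governed by how far $\W_t$ is from being invariant under $\B^{-1}\A$, which is controlled by $\sin\theta_t$ and by $\gamma = |\lambda_1|/|\lambda_k|$ (the $\Gamma_t$ here plays the role of the best rank-$k$ "Rayleigh quotient" correction, analogous to $\beta_t\w_t$ in the top-1 case). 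So I would prove a lemma of the form: the $\B$-norm distance from the warm start $\W_t\Gamma_t$ to the true solution $\B^{-1}\A\W_t$ is at most $O(\gamma\sqrt{k})$ times something like $\sin\theta_t$ times the scale of the columns, hence after reducing that distance by $\delta = \frac{\rho^2\cos^4\theta_0}{64k\gamma^2}$ the residual is small enough (roughly $\le \frac{\rho}{4}$ relative, in the right norm) that the inexact step still contracts $\tan\theta_t$ by $(1-\rho/2)$ per iteration. I then need to check that $\theta_t$ is monotonically non-increasing (so $\cos\theta_t \ge \cos\theta_0$ throughout and the constant $\delta$ choice is legitimate at every step), and that the Gram–Schmidt step $\mathrm{GS}_\B$ does not amplify error — which follows because orthonormalization is the nearest-orthonormal-matrix projection and thus non-expansive on the relevant angle. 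The two different $\delta$'s in the runtime ($\delta_0 = \frac{\rho^2\cos^4\theta_0}{64k\gamma^2}$ in the first $O(\log(1/\cos\theta_0))$ iterations versus $\delta = \frac{\rho^2}{64k\gamma^2}$ afterward) come from a standard two-phase split: in an initial burst of $O(\log(1/\cos\theta_0))$ iterations the angle shrinks from $\theta_0$ to a constant, during which $\cos\theta_t$ may still be as small as $\cos\theta_0$; after that $\cos\theta_t = \Theta(1)$ and the easier accuracy $\delta$ suffices for the remaining $O(\frac1\rho\log\frac1\epsilon)$ iterations.

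The main obstacle is the coupled error bookkeeping for $k>1$: unlike the top-1 case, "closeness to the eigenspace" is a statement about a whole subspace, the correction matrix $\Gamma_t = (\W_t^\top\B\W_t)^{-1}(\W_t^\top\A\W_t)$ is a $k\times k$ block whose conditioning enters (this is where $\gamma^2$ and the $k$ factors in $\delta$ originate), and the per-column linear-solve errors must be recombined without the principal angle blowing up. Concretely, the hard step is showing that if each column is solved to relative $\B$-accuracy $\delta$ from the warm start, then $\|\tilde\W_{t+1} - \B^{-1}\A\W_t\|_\B$ (or the corresponding quantity after $\mathrm{GS}_\B$) is small enough \emph{relative to the smallest singular value of the signal part} $\sigma_k(\text{projection of }\M\U_t\text{ onto top-}k\text{ space})$, which is what actually appears in the denominator of the $\tan\theta$ recursion — this is why $\gamma$ and $\cos^2\theta_0$ (lower bounds on that signal scale) show up squared inside $\delta$. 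Once that quantitative lemma is in place, the rest is assembling the geometric-series bound on the number of iterations, summing $\Tcal{\delta_0}$ over the first phase and $\Tcal{\delta}$ over the second, adding the per-iteration cost of applying $\A$ ($O(\nnz{\A}k)$), forming and inverting the $k\times k$ blocks ($O(dk^2)$), and running $\mathrm{GS}_\B$ ($O(dk^2 + \nnz{\B}k)$), all multiplied by $T$; and finally substituting the accelerated-gradient-descent bound $\Tcal{\delta} = O(\nnz{\B}\sqrt{\cn(\B)}\log(1/\delta))$ with the two values of $\delta$ to get the second displayed bound.
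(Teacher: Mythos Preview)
Your approach is correct and matches the paper's: the paper also tracks $\tan\theta_t$ directly in the $\B$-inner product (your change of variables $\U_t = \B^{1/2}\W_t$ is equivalent), derives the perturbed recursion
\[
\tan\theta_{t+1} \;\le\; \tan\theta_t\cdot\frac{|\lambda_{k+1}| + \|\xi\|_\B/\sin\theta_t}{|\lambda_k| - \|\xi\|_\B/\cos\theta_t},
\]
imposes the accuracy requirement $\|\xi\|_\B \le \tfrac{1}{4}(|\lambda_k|-|\lambda_{k+1}|)\min\{\sin\theta_t,\cos\theta_t\}$, bounds the warm-start error by $\|\W_t\Gamma_t - \B^{-1}\A\W_t\|_{\B,F}^2 \le 4k|\lambda_1|^2\tan^2\theta_t$ using the surrogate $\hat\Gamma_t = (\V^\top\B\W_t)^{-1}\Lambda_\V(\V^\top\B\W_t)$ (this is where the $k$ and $\gamma^2$ come from, via a Frobenius-to-spectral conversion and $\|\hat\Gamma_t\|\le|\lambda_1|/\cos\theta_t$), and does the same two-phase split.

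One correction worth flagging: Gram--Schmidt is \emph{not} the nearest-orthonormal-matrix map (that would be the polar factor), so your stated justification for why $\mathrm{GS}_\B$ is harmless does not go through as written. The paper's handling is simpler and exact: writing $\W_{t+1} = \tilde\W_{t+1}\mat{R}$ for some invertible $\mat{R}$, the potential $\tan\theta(\W,\V) = \|\Vc^\top\B\W(\V^\top\B\W)^{-1}\|$ is manifestly invariant under $\W\mapsto\W\mat{R}$, so the orthonormalization step has literally no effect on $\tan\theta$ and requires no perturbation argument at all.
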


%
\textbf{Remarks}:
\begin{itemize}
	\item	Lemma~\ref{lem:randinit} again tells us that since $\W_0$ is chosen to be normalized after choosing uniformly at random from the unit sphere, $\cos \theta_0 \geq \frac{\zeta}{\sqrt{d k \cn\left(\B\right)}}$ with probability greater than $1-\zeta$.
	\item	This result recovers Theorem~\ref{thm:main} as a special case, since when $k=1$, we also have $\gamma = \frac{ \abs{\lambda_1}} {\abs{\lambda_1}} = 1$.
\end{itemize}



\section{Application to CCA}\label{sec:cca}
We now outline how the CCA problem can be reduced to computing generalized eigenvectors. The CCA problem is as follows. Given two sets of data points $\X \in \R^{n\times d_1}$ and $\Y \in \R^{n\times d_2}$, let $\S_x \defeq \X\trans \X/n$, $\S_y \defeq \Y\trans \Y/n$, and $\S_{xy} \defeq \X\trans \Y /n$. We wish to find vectors $\bphi_1,\cdots,\bphi_k$ and $\bpsi_1,\cdots,\bpsi_k$ which are defined recursively as
	\begin{align*}
	&\qquad \qquad \left(\bphi_i, \bpsi_i\right) \in \argmax_{\bphi, \bpsi}  \bphi \trans \S_{xy} \bpsi \\
	&\st
	\begin{array}{c}
		\norm{\bphi}_{\S_x} =1 \mbox{ and } \bphi\trans \S_{x} \bphi_j = 0 \; \forall \; j \leq i-1 \\
		\norm{\bpsi}_{\S_y} =1 \mbox{ and } \bpsi\trans \S_{y} \bpsi_j = 0 \; \forall \; j \leq i-1.
	\end{array}	
	\end{align*}
where the values of $\bphi_i\trans\S_{xy}\bpsi_i$ are called canonical correlations between $\X$ and $\Y$.

For reduction, we know any stationary point of this optimization problem satisfies
$\S_{xy} \bpsi_i = \lambda_i \S_{x} \bphi_i$, and $\S_{yx} \bphi_i = \mu_i \S_{y} \bpsi_i$, where $\lambda_i$ and $\mu_i$ are two constants. Combined with the constraints, we also see that $\lambda_i = \mu_i$. This can be written in matrix form as
	$
	\begin{pmatrix}
	0 & \S_{xy} \\
	\S_{yx} & 0
	\end{pmatrix}
	\begin{pmatrix}
	\bphi_i \\ \bpsi_i
	\end{pmatrix}
	= \lambda_i
	\begin{pmatrix}
	\S_x & 0 \\
	0 & \S_y
	\end{pmatrix}
	\begin{pmatrix}
	\bphi_i \\ \bpsi_i
	\end{pmatrix}
	$.
Suppose the generalized eigenvalues of the above matrices are $-\lambda_1 < -\lambda_2 < \cdots < \lambda_2 < \lambda_1$.
The top $2k$-dimensional eigen-space of this generalized eigenvalue problem corresponds to the linear subspace
spanned by the eigenvectors of $\lambda_i$ and $-\lambda_i$, which are
$\begin{pmatrix}
	\bphi_i \\ \bpsi_i
	\end{pmatrix}, 
	\;
	\begin{pmatrix}
	-\bphi_i \\ \bpsi_i
\end{pmatrix} \; \forall \; i \in [k]$.
Once we solve the top-$2k$ generalized eigenvector problem for the matrices $\left(\begin{array}{cc}
0 & \S_{xy} \\ \S_{yx} & 0
\end{array}\right)$ and $\left(\begin{array}{cc}
\S_{xx} & 0 \\ 0 & \S_{yy}
\end{array}\right)$, we can pick any orthonormal basis that spans the output subspace and choose a random $k$-dimensional projection of those vectors. The formal algorithm is given in Algorithm~\ref{algo:appro-cca}. Combining this with our results for computing generalized eigenvectors, we obtain the following result.


\begin{algorithm}[h!]
	\begin{algorithmic}
		\renewcommand{\algorithmicrequire}{\textbf{Input: }}
		\renewcommand{\algorithmicensure}{\textbf{Output: }}
		\REQUIRE $T$, $k$, data matrix $\X \in \R^{n\times d_1}, \Y \in \R^{n\times d_2}$
		\ENSURE  top $k$ canonical subspace $\W_{x} \in \R^{d_1 \times k}, \W_{y} \in \R^{d_2 \times k}$.
		\STATE $\S_{xx} \leftarrow \X\trans\X/n, ~\S_{yy} \leftarrow \Y\trans\Y/n, ~\S_{xy} \leftarrow \X\trans\Y/n$.
		\STATE $\A \leftarrow \left(\begin{array}{cc}
0 & \S_{xy} \\ \S_{xy}\trans & 0
\end{array}\right)$, $\B \leftarrow \left(\begin{array}{cc}
\S_{xx} & 0 \\ 0 & \S_{yy}
\end{array}\right)$
		\STATE $\left(\begin{array}{c}
\bar{\W}_x \in \R^{d_1 \times 2k}\\ \bar{\W}_y\in \R^{d_2 \times 2k} \end{array}\right)\leftarrow $ \geneigk$\left(\A,\B\right)$.
		\STATE $\U \leftarrow $ $2k\times k$ random Gaussian matrix
		\STATE $\tilde{\W}_x \leftarrow \bar{\W}_x \U$.
		\STATE $\tilde{\W}_y \leftarrow \bar{\W}_y \U$.
		\STATE $\W_x = \text{GS}_{\S_{xx}}(\tilde{\W}_x), \W_x = \text{GS}_{\S_{yy}}(\tilde{\W}_y)$
		\STATE \textbf{Return} $\W_x, \W_y$.
	\end{algorithmic}
	\caption{\textbf{CCA} via \textbf{Lin}ear System Solvers (CCALin)}
	\label{algo:appro-cca}
\end{algorithm}

\begin{theorem} \label{thm:main_cca}
	Suppose the linear system solver takes time $\Tcal{\delta}$ to reduce the error by a factor $\delta$. Given inputs $\X$ and $\Y$, with probability greater than $1-\zeta$, then there is some universal constant $c$, so that Algorithm~\ref{algo:appro-cca} outputs $\W_x$ and $\W_y$ such that $\sin \theta(\spanvec{\phi_i;i\in[k]}, \W_x) \le \epsilon$, and $\sin \theta(\spanvec{\psi_i;i\in[k]}, \W_y) \le \epsilon$, in time
	\begin{align*}
	&O\left(
	\frac{1}{\rho} \left( \log\frac{d\cn}{\zeta} \cdot  \Tcal{\frac{c\zeta^6\rho^2 }{d^2k^5\cn^2 \gamma^2 }} +  \Tcal{\frac{c\zeta^2\rho^2}{k^3 \gamma^2}} \log \frac{1}{\epsilon}  \right) + \frac{1}{\rho} \left(\nnz{\X, \Y} k + dk^2\right) \log\frac{d\cn}{\zeta\epsilon}\right) ,
	\end{align*}
	where $\nnz{\X,\Y}\defeq \nnz{\X}+\nnz{\Y}$ and $\kappa \defeq  \max\left(\cn\left(\S_{xx}\right),\cn\left(\S_{yy}\right)\right)$ and $\gamma \defeq \frac{\lambda_1}{\lambda_k}$.
	If we use Nesterov's accelerated gradient descent (Algorithm~\ref{algo:AGD}) to solve the linear systems in GenELink, then the total runtime is
	\begin{align*}
	&O\left(\frac{ \nnz{\X,\Y}k \sqrt{\kappa}}{\rho} 
	\left( \log\frac{d\kappa}{\zeta} \log \frac{d \kappa\gamma}{\zeta \rho }+\log \frac{1}{\epsilon} \log \frac{k \gamma}{\rho} \right) + \frac{dk^2}{\rho} \log\frac{d\kappa}{\zeta \epsilon} \right),
	\end{align*}
\end{theorem}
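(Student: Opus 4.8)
The plan is to combine three ingredients: an \emph{exact} reduction of top-$k$ CCA to the top-$2k$ generalized eigenvector problem for the pair $(\A,\B)$ built in Algorithm~\ref{algo:appro-cca}; an application of Theorem~\ref{thm:main_k} to obtain an approximate $\B$-orthonormal basis of that eigenspace; and a stability analysis of the final random projection and Gram--Schmidt steps. I would begin with the reduction. From the stationarity conditions displayed before Algorithm~\ref{algo:appro-cca}, a generalized eigenpair $(\mu,(\bphi;\bpsi))$ of $(\A,\B)$ satisfies $\S_{xy}\bpsi=\mu\,\S_{xx}\bphi$ and $\S_{yx}\bphi=\mu\,\S_{yy}\bpsi$, so $\mu^2$ is an eigenvalue of $\S_{yy}^{-1}\S_{yx}\S_{xx}^{-1}\S_{xy}$; hence the generalized eigenvalues of $(\A,\B)$ are $\pm\sqrt{\sigma_1}\ge\cdots$ with $\sigma_i$ the squared canonical correlations, and conjugating by $\mathrm{diag}(-\I_{d_1},\I_{d_2})$ (which negates $\A$ and fixes $\B$) shows the eigenvectors pair up as $(\bphi_i;\bpsi_i),(-\bphi_i;\bpsi_i)$. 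Two facts I would record for later: (a) this $\pm$-pairing together with the CCA constraints $\bphi_i\trans\S_{xx}\bphi_j=\bpsi_i\trans\S_{yy}\bpsi_j=0$ ($i\neq j$) force $\norm{\bphi_i}_{\S_{xx}}=\norm{\bpsi_i}_{\S_{yy}}=1/\sqrt2$, so $\{\sqrt2\,\bphi_i\}_{i\in[k]}$ is $\S_{xx}$-orthonormal; (b) the top-$2k$ eigenspace $\mathcal{V}$ is the \emph{product} $\mathcal{V}_\phi\times\mathcal{V}_\psi$ with $\mathcal{V}_\phi=\spanvec{\bphi_i;i\in[k]}$ and $\mathcal{V}_\psi=\spanvec{\bpsi_i;i\in[k]}$, so for any $\B$-orthonormal basis $\V=(\V_x;\V_y)$ of $\mathcal{V}$ the block $\V_x$ has rank $k$, column span exactly $\mathcal{V}_\phi$, and, by (a), all of its nonzero singular values in the $\S_{xx}$-metric equal $1$ (a partial isometry onto $(\mathcal{V}_\phi,\norm{\cdot}_{\S_{xx}})$); symmetrically for $\V_y$. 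Finally $\cn(\B)=\max(\cn(\S_{xx}),\cn(\S_{yy}))=\kappa$, while $\rho$ and $\gamma$ of the statement are the relative gap at position $2k$ and the magnitude ratio for this $(\A,\B)$ problem (up to the constant between a gap in $\sigma_i$ and in $\sqrt{\sigma_i}$).

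Next I would invoke Theorem~\ref{thm:main_k} on $(\A,\B)$ with $2k$ in place of $k$ and target accuracy $\epsilon'$ to be fixed below. By Lemma~\ref{lem:randinit} the random start satisfies $\cos\theta_0\ge\zeta/\sqrt{2dk\kappa}$ with probability at least $1-\zeta/3$. The implementation point is that $\S_{xx},\S_{yy},\S_{xy}$ are never formed: one product of $\B$ with a vector is $\S_{xx}\u=\X\trans(\X\u)/n$ and $\S_{yy}\u=\Y\trans(\Y\u)/n$, costing $O(\nnz{\X}+\nnz{\Y})=O(\nnz{\X,\Y})$, and similarly for $\A$; so in the running time of Theorem~\ref{thm:main_k} we substitute $\nnz{\B},\nnz{\A}\leftarrow\nnz{\X,\Y}$. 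This produces a $\B$-orthonormal $\bar{\W}=(\bar{\W}_x;\bar{\W}_y)$ with $\sin\theta(\bar{\W},\V)\le\epsilon'$ in the claimed time, with $\epsilon\leftarrow\epsilon'$, $k\leftarrow 2k$, and $\cos\theta_0$ as above.

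The crux is the projection step. Write $\bar{\W}=\V\,\mat{O}+\Vc\,\mat{F}$ where $\mat{O}=\V\trans\B\bar{\W}$ and $\Vc$ is a $\B$-orthonormal basis of the $\B$-orthogonal complement of $\mathcal{V}$; $\B$-orthonormality of $\bar{\W}$ gives $\singmin{\mat{O}}=\cos\theta(\bar{\W},\V)\ge\sqrt{1-\epsilon'^2}$ and $\singmax{\mat{F}}\le\epsilon'$. Taking $x$-blocks and right-multiplying by the random Gaussian $\U\in\R^{2k\times k}$,
\[
\tilde{\W}_x \;=\; \bar{\W}_x\U \;=\; \V_x(\mat{O}\U)\;+\;(\Vc)_x\,\mat{F}\U .
\]
Write a thin $\S_{xx}$-metric factorization $\V_x\mat{O}=\M\,\Sigma\,\mat{P}\trans$ with $\M$ an $\S_{xx}$-orthonormal basis of $\mathcal{V}_\phi$, $\mat{P}\in\R^{2k\times k}$ having orthonormal columns, and $\singmin{\Sigma}\ge\singmin{\mat{O}}$ (using fact (b) and invertibility of $\mat{O}$); then $\V_x(\mat{O}\U)=\M\,\Sigma\,(\mat{P}\trans\U)$ and $\mat{P}\trans\U$ is a $k\times k$ standard Gaussian matrix. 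Standard Gaussian-matrix tail bounds give $\singmax{\U}=O(\sqrt k)$ and $\singmin{\mat{P}\trans\U}\ge c\zeta/\sqrt k$ with probability at least $1-\zeta/3$, whence $\V_x(\mat{O}\U)$ has rank $k$, column span exactly $\mathcal{V}_\phi$, and smallest nonzero $\S_{xx}$-singular value at least $c\zeta/(2\sqrt k)$, while $(\Vc)_x\mat{F}\U$ has $\S_{xx}$-norm at most $\epsilon'\cdot O(\sqrt k)$ (since $(\Vc)_x$ has $\S_{xx}$-operator-norm $\le1$). A standard $\sin\theta$ argument then gives $\sin\theta_{\S_{xx}}(\tilde{\W}_x,\mathcal{V}_\phi)\le O(k\epsilon'/\zeta)$ provided $\epsilon'\le c'\zeta/k$, and since $\W_x=\text{GS}_{\S_{xx}}(\tilde{\W}_x)$ has the same column span as $\tilde{\W}_x$ we get $\sin\theta(\W_x,\mathcal{V}_\phi)\le O(k\epsilon'/\zeta)$; the mirror argument bounds $\sin\theta(\W_y,\mathcal{V}_\psi)$. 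Choosing $\epsilon'=\Theta(\zeta\epsilon/k)$ makes both at most $\epsilon$; union-bounding the three probabilistic events gives success probability $\ge1-\zeta$, and substituting $\cos\theta_0=\Theta(\zeta/\sqrt{dk\kappa})$ and this $\epsilon'$ into Theorem~\ref{thm:main_k} — using $\log(1/\epsilon')=\log(1/\epsilon)+O(\log(dk\kappa\gamma/(\zeta\rho)))$ and $\log(1/\cos\theta_0)=O(\log(dk\kappa/\zeta))$, so the conversion factors enter only additively inside logarithms — and collecting terms yields the two displayed running times (the second specializing the solver to AGD, with $\cn(\B)=\kappa$ and $\nnz{\B},\nnz{\A}\leftarrow\nnz{\X,\Y}$).

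The step I expect to be the main obstacle is the one just sketched: showing that a random $2k\times k$ projection of an only-$\epsilon'$-accurate basis of $\mathcal{V}$ nevertheless recovers the correct $k$-dimensional span $\mathcal{V}_\phi$ with a bound free of hidden $\kappa$ or $\gamma$ factors. What makes it tractable is the pair of structural observations above — fact (a), which keeps $\V_x$ a partial isometry onto $\mathcal{V}_\phi$, and the observation that $\mat{P}\trans\U$ is again Gaussian, reducing the quantitative estimate to an off-the-shelf smallest-singular-value bound for square Gaussian matrices. The remaining pieces (the exact reduction, the substitution into Theorem~\ref{thm:main_k}, and the logarithmic bookkeeping) are routine.
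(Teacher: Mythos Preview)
Your proof is correct and follows the same overall architecture as the paper: (i) reduce CCA to a top-$2k$ generalized eigenvector problem for the block pair $(\A,\B)$, (ii) invoke Theorem~\ref{thm:main_k} with the random initialization bound of Lemma~\ref{lem:randinit}, and (iii) control the final random projection and Gram--Schmidt step. The only substantive difference is in how you handle step~(iii), which is indeed the crux.

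The paper packages step~(iii) as Lemma~\ref{lem:ccatoindividual}. There the argument works at the level of the Gram matrix: writing $\bar{\W}_y\U$ as $\frac{1}{\sqrt2}\Psi(\U_1+\U_2)+\EE$, the proof shows that $(\bar{\W}_y\U)\trans\S_{yy}(\bar{\W}_y\U)$ is an $O(k\theta)$-perturbation of $\frac12(\U_1+\U_2)\trans(\U_1+\U_2)$, then applies an inverse-square-root perturbation bound, which introduces an extra factor of $\sigma_{\min}^{-2}((\U_1+\U_2)\trans(\U_1+\U_2))$ and yields the final $O(k^2\theta/\zeta^2)$ loss. Your route is instead a direct $\tan\theta$ argument: you factor $\V_x\mat{O}=\M\Sigma\mat{P}\trans$ in the $\S_{xx}$-metric, observe that $\mat{P}\trans\U$ is again a $k\times k$ Gaussian, and read off $\tan\theta_{\S_{xx}}(\tilde{\W}_x,\mathcal{V}_\phi)\le O(k\epsilon'/\zeta)$ from the ratio of the perturbation size to the smallest signal singular value. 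The structural input that makes this clean---that $\V_x$ is an exact partial isometry onto $\mathcal{V}_\phi$ in the $\S_{xx}$-metric, a consequence of the $\pm$-pairing of eigenvectors---is implicit in the paper's calculation but you make it explicit. Your argument is tighter by one factor of $k/\zeta$; since this discrepancy affects only the target accuracy fed to GenELinK, it changes the final bound only inside logarithms and is therefore absorbed by the $O(\cdot)$ in the theorem statement. Both approaches are valid; yours avoids the matrix-function perturbation step and is arguably more transparent about why no $\kappa$ or $\gamma$ factors leak into the projection analysis.
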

\textbf{Remarks}:
\begin{itemize}
	\item	Note that we depend on the maximum of the condition numbers of $\S_{xx}$ and $\S_{yy}$ since the linear systems that arise in \geneigk~decompose into two separate linear systems, one in $\S_{xx}$ and the other in $\S_{yy}$.
	\item	We can also exploit sparsity in the data matrices $\X$ and $\Y$ since we only need to apply $\S_{xx}, \S_{xy}$ or $\S_{yy}$ only as operators, which can be done by applying $\X$ and $\Y$ in appropriate order. Exploiting sparsity is crucial for any large scale algorithm since there are many data sets (e.g., URL dataset in our experiments) where dense operations are impractical.
\end{itemize}

\section{Reduction to Linear System}
\label{sec:reduction}

Here we show that solving linear systems in $\B$ is inherent in solving the top-$k$ generalized eigenvector problem in the worst case and we provide evidence a $\sqrt{\kappa(\B)}$ factor in the running time is essential for a broad class of iterative methods for the problem. 

Let $\M$ be a symmetric positive definite matrix and suppose we wish to solve the linear system $\M \x = \m$, i.e. compute $\x_*$ with $\M \x_* = \m$. If we set $\A = \m \m^\top$ and $\B = \M$ then 
\[
\argmax_{\x^\top \B \x = 1} \x^\top \A \x
= \frac{\B^{-1} \m}{\m^\top \B^{-1} \m}
\]
and consequently computing the top-1 generalized eigenvector yields the solution to the linear system. Therefore, the problem of computing top-$k$ generalized eigenvectors is in general harder than the problem of solving symmetric positive definite linear systems. 

Moreover, it is well known that any method which starts at $\m$ and iteratively applies $\M$ to linear combinations of the points computed so far must apply $\M$ at least $\Omega(\sqrt{\kappa(\B)})$ in order to halve the error in the standard norm for the problem \cite{Shewchuk:1994:ICG:865018}.
Consequently, methods that solve the  top-$1$ generalized eigenvector problem by simply applying $\A$ and $\B$, which is the same as applying $\M$ and taking linear combinations with $\m$, must apply $\M$ at least $\Omega(\sqrt{\kappa(\M)})$ times to achieve small error, unless they exploit more structure of $\M$  or the initialization.


\section{Simulations}\label{sec:exps}

In this section, we present our experiment results performing CCA on three benchmark datasets which are summarized in Table~\ref{summary_datasets}. We wish to demonstrate two things via these simulations: 1) the behavior of CCALin verifies our theoretical result on relatively small-scale dataset, and 2) scalability of CCALin comparing it with other existing algorithms on a large-scale dataset.

\begin{table}[h!]
\caption{Summary of Datasets}
\label{summary_datasets}
\vskip 0.15in
\begin{center}
\begin{small}
\begin{sc}
\begin{tabular}{ccccc}
\hline
\abovespace\belowspace
Dataset & $d_1$ & $d_2$ & $n$ & sparsity
\footnotemark\\
\hline
\abovespace
MNIST    & $392$ & $392$ & $6\times 10^4$ & $0.19$\\
Penn Tree Bank    &  $10^4$ & $10^4$ & $5 \times 10^5$ & $1\times 10^{-4}$\\
URL Reputation    &  $10^5$ & $10^5$ & $1 \times 10^6$ & $5.8 \times 10^{-5}$ \\
\hline
\end{tabular}
\end{sc}
\end{small}
\end{center}
\vskip -0.1in
\end{table}
\footnotetext{Sparsity is given by $(\nnz{\X} + \nnz{\Y})/(nd_1+nd_2)$.}

Let us now specify the error metrics we use in our experiments. The first ones are the principal angles between the estimated subspaces and the true ones. Let $\W_x$ and $\W_y$ be the estimated subspaces and $\V_x$, $\V_y$ be the true canonical subspaces. We will use principle angles $\theta_x = \theta(\W_x, \V_x)$ under $\S_{xx}$-norm, $\theta_y = \theta(\W_x, \V_x)$ under $\S_{yy}$-norm and
$\theta_{\B} = \theta\left(
\left(\begin{array}{c c}
\V_x & 0\\ 0 & \V_y\end{array}\right),
\left(\begin{array}{c c}
\bar{\W}_x \\ \bar{\W}_y\end{array}\right)\right)$
\footnote{See Algorithm \ref{algo:appro-cca} for definition of $\bar{\W}_x, \bar{\W}_y$}, under the $\left(\begin{array}{c c}
\S_{xx} & 0\\ 0 & \S_{yy}\end{array}\right)$ norm.
Unfortunately, we cannot compute these error metrics for large-scale datasets since they require knowledge of the true canonical components. Instead we will use Total Correlations Captured (TCC), which is another metric widely used by practitioners, defined to be the sum of canonical correlation between two matrices. Also, Proportion of Correlations Captured is given as
$$\text{PCC} = \text{TCC}(\X\W_x, \Y\W_y)/\text{TCC}(\X\V_x, \Y\V_y)$$
For a fair comparison with other algorithms (which usually call highly optimized matrix inversion subroutines), we use number of FLOPs instead of wall clock time to measure the performance.

\subsection{Small-scale Datasets}
\textbf{MNIST} dataset\cite{lecun1998mnist} consists of 60,000 handwritten digits from 0 to 9.
Each digit is a image represented by $392\times392$ real values in [0,1]. 
Here CCA is performed between left half images and right half images.
The data matrix is dense but the dimension is fairly small.

\noindent\textbf{Penn Tree Bank} (PTB) dataset comes from full Wall Street Journal Part of Penn Tree Bank which consists of 1.17 million tokens and a vocabulary size of 43k\cite{marcus1993building}, which has already been used to successfully learn the word embedding by CCA\cite{dhillon2011multi}. Here, the task is to learn correlated components between two consecutive words. We only use the top 10,000 most frequent words. Each row of data matrix $\X$ is an indicator vector and hence it is very sparse and $\X\trans\X$ is diagonal.

Since the input matrices are very ill conditioned, we add some regularization and replace $\S_{xx}$ by $\S_{xx}+\lambda\I$ (and similarly with $\S_{yy}$). In CCALin, we run \geneigk~with $k=10$ and accelerated gradient descent
 (Algorithm \ref{algo:AGD} in the supplementary material) to solve the linear systems. 
The results are presented in Figure \ref{fig:mnist_ptb} and Figure \ref{fig:mnist_ptb_2}.
\begin{figure}[ht]
\begin{center}

\subfigure{\includegraphics[width=0.48\columnwidth]{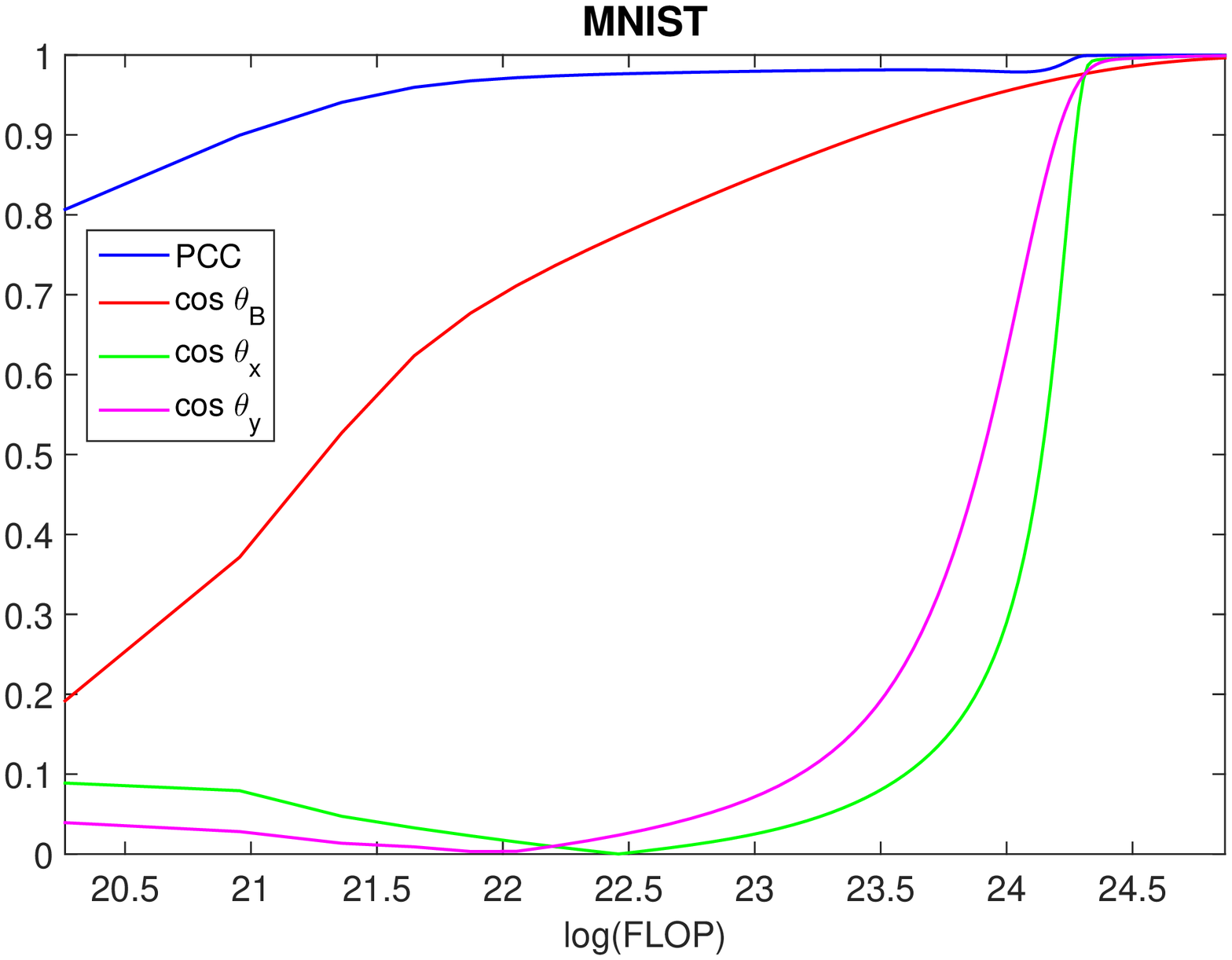}}
\subfigure{\includegraphics[width=0.48\columnwidth]{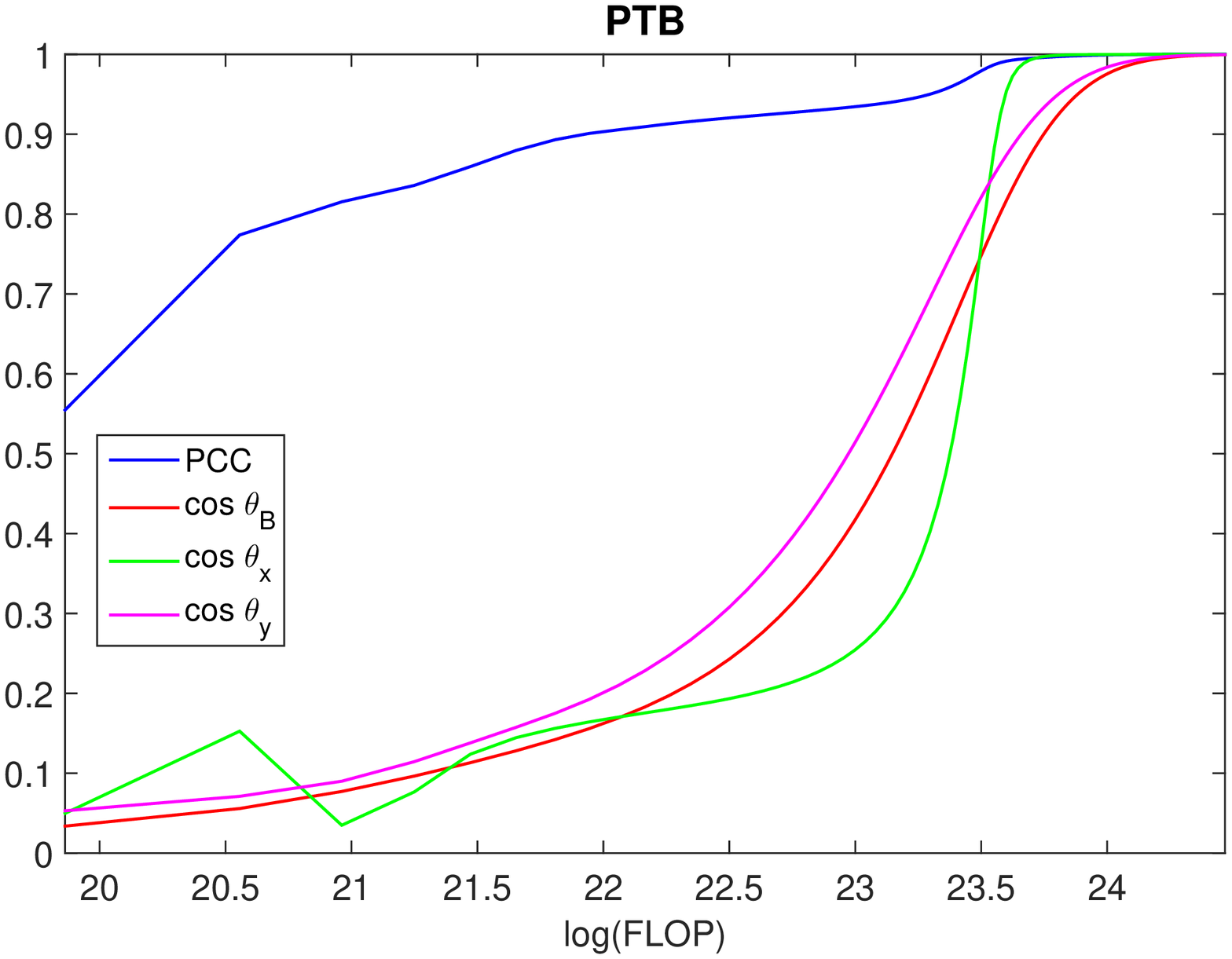}}
\caption{Global convergence of PCC and principle angles on MNIST and PTB Datasets}
\label{fig:mnist_ptb}
\end{center}
\vskip -0.1in
\end{figure} 

Figure \ref{fig:mnist_ptb} shows a typical run of CCALin from random initialization on both 
MNIST and PTB dataset. We see although $\theta_x, \theta_y$ may be even 90 degree at some point respectively, $\theta_\B$ is always monotonically decreasing (as $\cos \theta_\B$ monotonically increasing) as predicted by our theory. In the end, as $\theta_\B$ goes to zero, it will push both $\theta_x$ and $\theta_y$ go to zero, and PCC go to 1. This demonstrates that our algorithm indeed converges to the true canonical space. 

\begin{figure}[ht]
\begin{center}

\subfigure{\includegraphics[width=0.48\columnwidth]{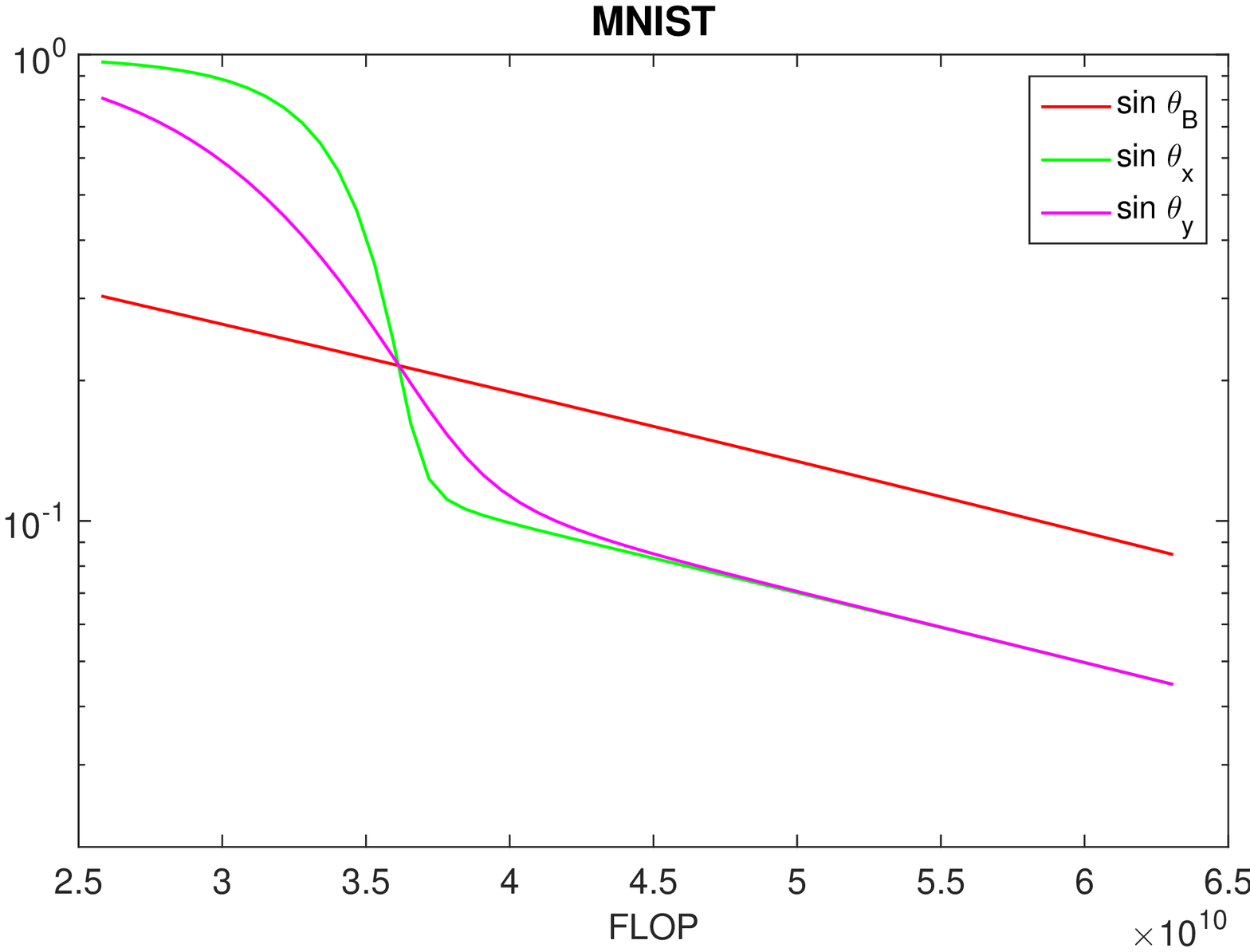}}
\subfigure{\includegraphics[width=0.48\columnwidth]{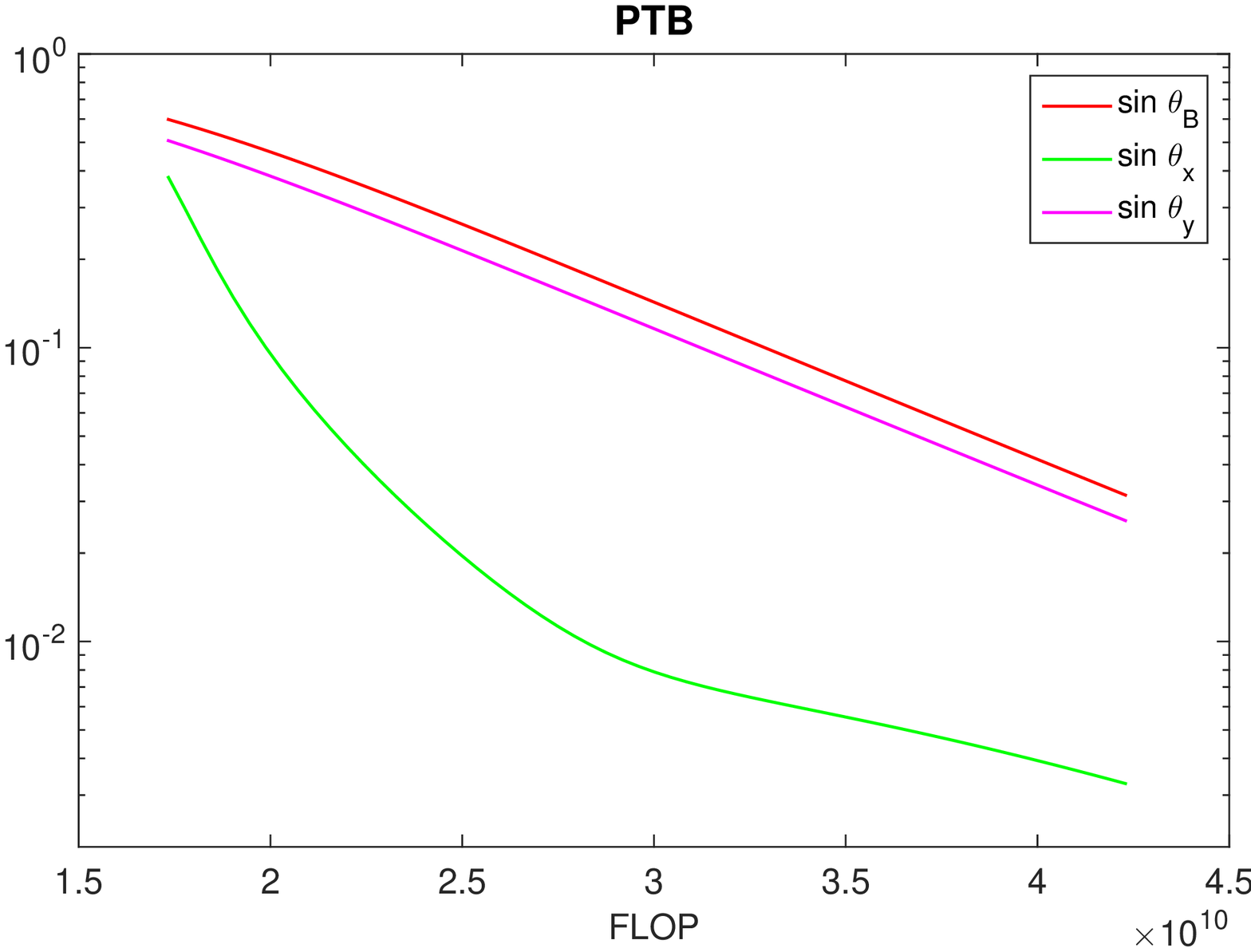}}
\caption{Linear convergence of principle angles on MNIST and PTB Datasets}
\label{fig:mnist_ptb_2}
\end{center}
\vskip -0.1in
\end{figure} 

Furthermore, by a more detailed examination of experimental data in Figure \ref{fig:mnist_ptb}, we observe in Figure \ref{fig:mnist_ptb_2} that $\sin \theta_\B$ is indeed linearly convergent as we predicted in the theory. 
In the meantime, $\sin \theta_x$ and $\sin\theta_y$ may initially converge a bit slower than $\sin\theta_\B$, but in the end they will be upper bounded by $\sin\theta_\B$ times a constant factor, thus will eventually converge at a linear rate at least as fast as $\sin\theta_\B$.


\subsection{Large-scale Dataset}

\textbf{URL Reputation} dataset contains 2.4 million URLs and 3.2 million features including both host-based features and lexical based features. Each feature is either real valued or binary. For experiments in this section, we follow the setting of~\cite{deanCCA}. We use the first 2 million samples, and run CCA between a subset of host based features and a subset of lexical based features to extract the top $20$ components. Although the data matrix $\X$ is relatively sparse, unlike PTB, it has strong correlations among different coordinates, which makes $\X\trans\X$ much denser ($\nnz{\X\trans\X}/d_1^2 \approx 10^{-3}$). 

Classical algorithms are impractical for this dataset on a typical computer, either running out of memory or requiring prohibitive amount of time. Since we cannot estimate the principal angles, we will evaluate TCC performance of CCALin.

We compare our algorithm to S-AppGrad~\cite{deanCCA} which is an iterative algorithm and PCA-CCA~\cite{deanCCA}, NW-CCA~\cite{WittenTH2009} and DW-CCA~\cite{lu2014large} which are one-shot estimation procedures.

\begin{figure}[ht]
\vspace{0.2in}
\begin{center}
\centerline{\includegraphics[width=0.5\columnwidth]{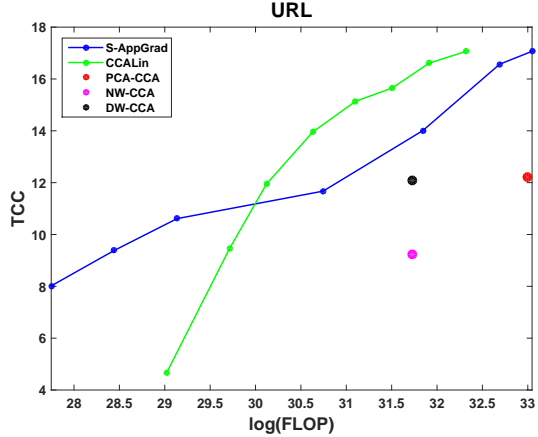}}
\caption{Comparison with existing algorithms on URL dataset}
\label{fig:url}
\end{center}
\vspace{-0.2in}
\end{figure} 

In CCALin, we employ \geneigk~using stochastic accelerated gradient descent for solving linear systems using minibatches in each of the gradient steps and also leverage sparsity of the data to deal with the large data size. The result is shown in Figure~\ref{fig:url}.
It is clear from the plot that our algorithm takes fewer computations than the other algorithms to achieve the same accuracy.

\section{Conclusion}

In summary, we have provided the first provable globally linearly convergent algorithms for solving canonical correlation analysis and the generalized eigenvector problems. We have shown that for recovering the top $k$ components our algorithms are much faster than traditional methods based on fast matrix multiplication and singular value decomposition when $k \ll n$ and the condition numbers and eigenvalue gaps of the matrices involved are moderate. Moreover, we have provided empirical evidence that our algorithms may be useful in practice. We hope these results serve as the basis for further improvements in performing large scale data analysis both in theory and in practice.

\bibliographystyle{apalike}
\bibliography{refs}

\newpage
\appendix

\section{Solving Linear System via Accelerated Gradient Descent}

 \begin{algorithm}[h!]
 \caption{Nesterov's accelerated gradient descent}\label{algo:AGD}
 \begin{algorithmic}
 \renewcommand{\algorithmicrequire}{\textbf{Input: }}
 \renewcommand{\algorithmicensure}{\textbf{Output: }}
 \REQUIRE learning rate $\eta$, factor $Q$, initial point $\x_0$, $T$.
 \ENSURE  minimizer $x^\star$ of $f$ .
 \FOR{$t = 0,\cdots,T-1$}
 \STATE $\y_{t+1} \leftarrow \x_t - (1/\beta)\cdot \grad f(\x_t)$
 \STATE $\x_{t+1} \leftarrow  \y_{t+1} +  (\sqrt{Q}-1)/(\sqrt{Q}+1) \cdot(\y_{t+1} - \y_t) $
 \ENDFOR
 \STATE \textbf{Return} $\y_T$.
 \end{algorithmic}
 \end{algorithm}

 Since we use accelerated gradient descent in our main theorems, for completeness, we put the algorithm and cite its result about iteration complexity here without proof.

\begin{theorem}[\cite{nesterov1983method}] \label{thm:AGD_guarantee}
	Let $f$ be $\alpha$-strongly convex and $\beta$-smooth, then accelerated gradient descent with learning rate $\eta = \frac{1}{\beta}$ and $Q = \beta /\alpha$ satisfies:
	\begin{equation}
	f(\x_t) - f(\x^\star) \le 2(f(\x_0) - f(\x^\star))\exp(-\frac{t}{\sqrt{Q}})
	\end{equation}
\end{theorem}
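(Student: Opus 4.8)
The plan is to prove this classical bound via Nesterov's \emph{estimate sequence} technique (matching the attribution to \cite{nesterov1983method}); note that the guarantee is naturally stated for the gradient-step iterate $\y_t$ that Algorithm~\ref{algo:AGD} actually returns, and we read the $\x_t$ in the statement as this sequence. Recall that a sequence of functions $\{\phi_t : \R^d \to \R\}_{t \ge 0}$ together with scalars $\lambda_t \downarrow 0$ form an estimate sequence for $f$ if $\phi_t(\x) \le (1 - \lambda_t) f(\x) + \lambda_t \phi_0(\x)$ for all $\x$; if moreover $f(\y_t) \le \phi_t^\star \defeq \min_{\x} \phi_t(\x)$, then combining the two gives
\[
f(\y_t) - f(\x^\star) \;\le\; \phi_t^\star - f(\x^\star) \;\le\; \lambda_t\bigl(\phi_0(\x^\star) - f(\x^\star)\bigr),
\]
so it suffices to build such a pair with $\lambda_t$ shrinking like $(1 - 1/\sqrt{Q})^t$ and with a good $\phi_0$.

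First I would set $\phi_0(\x) \defeq f(\y_0) + \frac{\alpha}{2}\norm{\x - \y_0}^2$ (with $\y_0 \defeq \x_0$), $\lambda_0 = 1$, and define recursively, with $\tau \defeq 1/\sqrt{Q} = \sqrt{\alpha/\beta}$ and writing $\x_t$ for the extrapolation point computed by the algorithm,
\[
\phi_{t+1}(\x) \defeq (1-\tau)\,\phi_t(\x) + \tau\Bigl(f(\x_t) + \grad f(\x_t)\trans(\x - \x_t) + \tfrac{\alpha}{2}\norm{\x - \x_t}^2\Bigr), \qquad \lambda_{t+1} \defeq (1-\tau)\lambda_t .
\]
Because $f$ is $\alpha$-strongly convex, the quadratic in the bracket lower-bounds $f$, so an immediate induction gives the estimate-sequence inequality. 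A routine computation shows each $\phi_t$ stays a quadratic of curvature exactly $\alpha$, i.e. $\phi_t(\x) = \phi_t^\star + \frac{\alpha}{2}\norm{\x - \v_t}^2$, with explicit recursions for the center $\v_t$ and the minimum value $\phi_t^\star$.

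The core of the proof is the second property, $f(\y_t) \le \phi_t^\star$, by induction on $t$. The inductive step uses: (i) the $\beta$-smoothness descent inequality $f(\y_{t+1}) \le f(\x_t) - \frac{1}{2\beta}\norm{\grad f(\x_t)}^2$ for the gradient step $\y_{t+1} = \x_t - \frac{1}{\beta}\grad f(\x_t)$; (ii) the explicit recursion for $\phi_{t+1}^\star$ in terms of $\phi_t^\star$, $f(\x_t)$ and $\norm{\grad f(\x_t)}^2$; and (iii) the freedom to choose the expansion point $\x_t$ so that the cross terms cancel, which pins $\x_t$ to a specific convex combination of $\y_t$ and $\v_t$; unrolling the $\v$-recursion then shows this is \emph{exactly} the momentum step $\x_t = \y_t + \frac{\sqrt{Q}-1}{\sqrt{Q}+1}(\y_t - \y_{t-1})$ of Algorithm~\ref{algo:AGD}. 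Verifying that the constant coefficient $\frac{\sqrt{Q}-1}{\sqrt{Q}+1}$ is precisely the one the estimate-sequence recursion produces once $\lambda_t$ decays geometrically — equivalently, that the two-sequence form in the algorithm coincides with Nesterov's constant-step scheme — is the main obstacle; the rest is bookkeeping with quadratics.

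Finally I would assemble the bound: $\lambda_t = (1-\tau)^t \le e^{-\tau t} = e^{-t/\sqrt{Q}}$, and $\phi_0(\x^\star) - f(\x^\star) = \bigl(f(\x_0) - f(\x^\star)\bigr) + \frac{\alpha}{2}\norm{\x_0 - \x^\star}^2$. By $\alpha$-strong convexity and $\grad f(\x^\star) = 0$, $\frac{\alpha}{2}\norm{\x_0 - \x^\star}^2 \le f(\x_0) - f(\x^\star)$, hence $\phi_0(\x^\star) - f(\x^\star) \le 2\bigl(f(\x_0) - f(\x^\star)\bigr)$, and substituting gives $f(\y_t) - f(\x^\star) \le 2\bigl(f(\x_0) - f(\x^\star)\bigr)\exp(-t/\sqrt{Q})$, which is the claimed inequality. (One can instead bypass the estimate-sequence algebra by exhibiting the Lyapunov function $\Phi_t \defeq f(\y_t) - f(\x^\star) + \frac{\alpha}{2}\norm{\v_t - \x^\star}^2$ and showing $\Phi_{t+1} \le (1-\tau)\Phi_t$ directly from (i) and strong convexity; the factor $2$ enters identically via $\Phi_0 \le 2(f(\x_0) - f(\x^\star))$.)
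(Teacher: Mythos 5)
The paper does not prove this statement at all --- it explicitly imports it ``without proof'' as a citation to Nesterov --- so there is nothing internal to compare against; your reconstruction via estimate sequences (with the constant-step parameters $\tau = 1/\sqrt{Q}$, the quadratic $\phi_t$ of curvature $\alpha$, and the factor $2$ arising from $\frac{\alpha}{2}\norm{\x_0-\x^\star}^2 \le f(\x_0)-f(\x^\star)$) is exactly the standard argument from the cited source, and it is correct, including the observation that the guarantee properly attaches to the gradient-step iterates $\y_t$ that the algorithm returns rather than to the extrapolation points. The only part left as a sketch is the inductive verification that $f(\y_{t+1}) \le \phi_{t+1}^\star$ forces the momentum coefficient $\frac{\sqrt{Q}-1}{\sqrt{Q}+1}$, but you correctly identify this as routine bookkeeping and the Lyapunov-function alternative you mention closes it equally well.
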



\section{Proofs of Main Theorem}
In this section we will prove Theorems~\ref{thm:main}, ~\ref{thm:main_k} and ~\ref{thm:main_cca}.
\subsection{Rank-1 Setting}
We first prove our claim that $\Binv \A$ has an eigenbasis.
\begin{lemma}\label{lem:eig-sing}
	Let $(\u_i,\sigma_i)$ be the eigenpairs of the symmetric matrix $\B^{-1/2}\A\B^{-1/2}$. Then $\B^{-1/2} \u_i$ is an eigenvector of $\Binv \A$ with eigenvalue $\sigma_i$.
\end{lemma}
\begin{proof}
	The proof is straightforward.
	\begin{align*}
		\Binv \A \left(\B^{-1/2} \u_i\right)
		&= \B^{-1/2}\left(\B^{-1/2}\A\B^{-1/2} \u_i\right) = \sigma_i \B^{-1/2} \u_i.
	\end{align*}
\end{proof}
Denote the eigenpairs of $\Binv\A$ by $(\lambda_i,\v_i)$,
the above lemma further tells us that $\v_i \trans \B \v_j = \u_i \trans \u_j = \delta_{ij}$.

Recall that we defined the angle between $\w$ and $\v_1$ in the $\B$-norm: $\theta\left(\w,\v_1\right)=\arccos\left(|\v_1\trans \B \w|\right)$.

To measure the distance from optimality, we use the following potential function for normalized vector $\w$ ($\norm{\w}_{\B} = 1$):
\begin{align}
\tan\theta(\w, \v_1)  = \frac{\sqrt{1 - |\v_1\trans \B \w|^2 }}{|\v_1\trans \B \w| }.
\end{align}


\begin{lemma}\label{lem:potfunc}
	Consider any $\w$ such that $\norm{\w}_{\B}=1$ and $\tan\theta(\w, \v_1)\leq \epsilon$. Then, we have:
	\begin{align*}
		\cos^2\theta(\w, \v_1) = (\v_1 \trans \B \w)^2 \geq 1-\epsilon^2 \mbox{ and } \w \trans \A \w \geq \lambda_1(1-\epsilon^2).
	\end{align*}
\end{lemma}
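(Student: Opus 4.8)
The plan is to unwind the definitions. Since $\w$ is $\B$-normalized and $\tan\theta(\w,\v_1) \le \epsilon$, writing $c \defeq |\v_1\trans \B \w| = \cos\theta(\w,\v_1)$, the hypothesis reads $\sqrt{1-c^2}/c \le \epsilon$, equivalently $1-c^2 \le \epsilon^2 c^2 \le \epsilon^2$ (using $c \le 1$). Hence $\cos^2\theta(\w,\v_1) = (\v_1\trans\B\w)^2 = c^2 \ge 1-\epsilon^2$, which is the first claim. This step is immediate; the only thing to be slightly careful about is the direction of the inequality chain $1-c^2 \le \epsilon^2 c^2 \le \epsilon^2$, but since $c \in [0,1]$ this is fine.

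For the second claim I would expand $\w$ in the $\B$-orthonormal eigenbasis $\{\v_i\}$ of $\Binv\A$ (whose existence and $\B$-orthonormality are furnished by Lemma~\ref{lem:eig-sing} and the remark following it). Write $\w = \sum_i a_i \v_i$ with $a_i = \v_i\trans\B\w$; then $\norm{\w}_\B^2 = \sum_i a_i^2 = 1$ and $a_1^2 = c^2 \ge 1-\epsilon^2$. Using $\A\v_i = \lambda_i \B\v_i$ and $\v_i\trans\B\v_j = \delta_{ij}$, compute $\w\trans\A\w = \sum_i \lambda_i a_i^2$. Then bound
\begin{align*}
\w\trans\A\w = \sum_i \lambda_i a_i^2 \ge \lambda_1 a_1^2 - \sum_{i\ge 2}|\lambda_i| a_i^2 \ge \lambda_1 a_1^2 - \lambda_1 \sum_{i\ge 2} a_i^2 = \lambda_1\left(a_1^2 - (1-a_1^2)\right),
\end{align*}
where I used $|\lambda_i| \le |\lambda_1| = \lambda_1$ (assuming, as is standard here, $\lambda_1 > 0$; if $\lambda_1 < 0$ one works with $|\w\trans\A\w|$ and the statement should be read accordingly). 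Since $a_1^2 \ge 1-\epsilon^2$, we get $a_1^2 - (1-a_1^2) = 2a_1^2 - 1 \ge 2(1-\epsilon^2) - 1 = 1 - 2\epsilon^2$. This yields $\w\trans\A\w \ge \lambda_1(1-2\epsilon^2)$, which is slightly weaker than the stated $\lambda_1(1-\epsilon^2)$; I would recover the sharper bound by instead writing $\w\trans\A\w = \sum_i \lambda_i a_i^2 \ge \lambda_1 a_1^2 + \lambda_1\sum_{i\ge 2}\lambda_i a_i^2/\lambda_1$ — more cleanly, note $\sum_i \lambda_i a_i^2 \ge \lambda_1\sum_i a_i^2 - (\lambda_1 - \lambda_{\min})\cdot(\text{stuff})$ is not tight either; the clean route is $\w\trans\A\w \ge \lambda_1 a_1^2 - |\lambda_1|(1 - a_1^2)$ only when eigenvalues can be as negative as $-\lambda_1$, whereas if all $\lambda_i \ge 0$ then simply $\w\trans\A\w = \sum_i\lambda_i a_i^2 \ge \lambda_1 a_1^2 \ge \lambda_1(1-\epsilon^2)$.

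The main (and only) subtlety is therefore the sign structure of the spectrum of $\Binv\A$: the clean bound $\w\trans\A\w \ge \lambda_1(1-\epsilon^2)$ holds verbatim when $\A$ is PSD (so all $\lambda_i \ge 0$), and in the general symmetric case one should interpret the conclusion as a statement about the Rayleigh-type quantity aligned with $\v_1$, getting $\lambda_1(1-\epsilon^2)$ from the positive part and absorbing the negative eigenvalues' contribution, which are nonnegative in $\w\trans\A\w$ only if they don't appear — i.e.\ the cleanest reading keeps $\A$ PSD as in the PCA interpretation discussed in the introduction. I expect the author's proof simply uses the PSD-style bound $\sum_i \lambda_i a_i^2 \ge \lambda_1 a_1^2 \ge \lambda_1(1-\epsilon^2)$, and I would present it that way.
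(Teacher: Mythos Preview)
Your approach is essentially identical to the paper's: the first part via $\cos^2\theta = 1/(1+\tan^2\theta) \ge 1/(1+\epsilon^2) \ge 1-\epsilon^2$, and the second via the eigenbasis expansion $\w\trans\A\w = \sum_i \lambda_i a_i^2 \ge \lambda_1 a_1^2 \ge \lambda_1(1-\epsilon^2)$. Your discussion of the sign subtlety is apt --- the paper uses exactly the PSD-style bound you anticipate in your final line, without commenting on the case where some $\lambda_i < 0$.
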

\begin{proof}
	Clearly, 
	\begin{align*}
		(\v_1 \trans \B \w)^2 = \cos^2\theta(\w, \v_1) = \frac{1}{1+\tan^2\theta(\w, \v_1)}
	  \geq \frac{1}{1+\epsilon^2} \geq 1-\epsilon^2,
	\end{align*}
	proving the first part. For the second part, we have the following:
	\begin{align*}
		\w \trans \A \w & = \sum_{i,j} (\v_i \trans \B \w)(\v_j \trans \B \w) \v_i \trans \A \v_j = \sum_{i,j} \lambda_j (\v_i \trans \B \w)(\v_j \trans \B \w) \v_i \trans \B \v_j \\
		& = \sum_i \lambda_i (\v_i \trans \B \w)^2 \geq \lambda_1 (\v_1 \trans \B \w)^2 \geq (1-\epsilon^2)\lambda_1,
	\end{align*}
	proving the lemma.
\end{proof}

\begin{proof}[Proof of Theorem~\ref{thm:main}]
	We will show that the potential function $\tan \theta(\w_t, \v_1)$ decreases geometrically with $t$. This will directly provides an upper bound for $\sin \theta(\w_t, \v_1)$. For simplicity, through out the proof we will simply denote $\theta(\w_t, \v_i)$ as $\theta_t$.

	Recall the updates in Algorithm \ref{algo:appro-power}, suppose at time $t$, we have $\w_t$ such that $\norm{\w_t}_{\B}=1$. Let us say
	\begin{equation}
	\w_{t+1} = \frac{1}{Z}(\B^{-1}\A \w_t + \xi)
	\end{equation}
	where $Z$ is some normalization factor, and $\xi$ is the error in solving the least squares. We will first prove the geometric convergence claim assuming
	\begin{align}
		\norm{\xi}_\B \le \frac{|\lambda_1| - |\lambda_2|}{4}\min\{ \cos\theta_t, \sin\theta_t\}, \label{eqn:xibound}
	\end{align}
	and then bound the time taken by black-box linear system solver to provide such an accuracy. Since $\w_t$ can be written as $\w_t = \sum_i \left(\w_t \trans \B \v_i\right) \v_i$, we know $\B^{-1}\A \w_t = \sum_{i=1}^d  \lambda_i \left(\w_t \trans \B \v_i\right) \v_i$.
	Since $\norm{\w_{t+1}}_{\B}=1$ and $\v_i \trans \B \v_j = \delta_{ij}$, we have
	\begin{align*}
	\tan\theta_{t+1} &= \frac{\sqrt{Z^2 - |\v_1\trans \B Z\w_{t+1}|^2 }}{|\v_1\trans \B Z\w_{t+1}| }
	\le \frac{\sqrt{\sum_{i=2}^d \left(\w_t \trans \B \v_i\right)^2 \lambda_i^2 } +\norm{\xi}_\B }{|\left(\w_t \trans \B \v_1\right) \lambda_1| -\norm{\xi}_\B} \nn\\
	&\le \frac{\sqrt{1-\left(\w_t \trans \B \v_1\right)^2}}{|\w_t \trans \B \v_1|}\times\frac{|\lambda_2| + \frac{\norm{\xi}_\B }{\sqrt{1-\left(\w_t \trans \B \v_1\right)^2}} }{|\lambda_1| -\frac{\norm{\xi}_\B}{|\w_t \trans \B \v_1|} }
	= \tan\theta_t \times \frac{|\lambda_2| + \frac{\norm{\xi}_\B }{\sqrt{1-\left(\w_t \trans \B \v_1\right)^2}} }{|\lambda_1| -\frac{\norm{\xi}_\B}{|\w_t \trans \B \v_1|} }
	\end{align*}

	By definition of $\theta_t$, we know $\cos \theta_t = |\w_t \trans \B \v_1|$ and $\sin \theta_t = \sqrt{1-\left(\w_t \trans \B \v_1\right)^2}$ giving us
	\begin{align*}
	\tan\theta_{t+1} \leq \tan\theta_t \times \frac{|\lambda_2| + \frac{\norm{\xi}_\B }{\sin \theta_t} }{|\lambda_1| -\frac{\norm{\xi}_\B}{\cos \theta_t} }.
	\end{align*}

	Since $\norm{\xi}_\B \le \frac{|\lambda_1| - |\lambda_2|}{4}\min\{ \cos\theta_t, \sin \theta_t\}$, we have that
	\begin{align*}
	\tan \theta_{t+1} \le 
	\frac{|\lambda_1| + 3|\lambda_2|}{3|\lambda_1| + |\lambda_2|} \times \tan\theta_{t}.
	\end{align*}
	Letting $\gamma = \frac{3|\lambda_1| + |\lambda_2|}{|\lambda_1| + 3|\lambda_2|}$, this shows that $G(\w_{t}) \leq \gamma^t G(\w_0)$. Recalling the definition of eigengap $\rho = 1 - \frac{\abs{\lambda_2}}{\abs{\lambda_1}}$, choosing $t$ to be
	\begin{align}
		t \geq \frac{2}{\rho}\log\left(\frac{1}{\epsilon\cos\theta_0}\right)
		\geq \frac{\log\left(\frac{\tan\theta_0}{\epsilon}\right)}{\left(\frac{1}{\gamma}-1\right)}
		\geq \frac{\log\left(\frac{\tan\theta_0}{\epsilon}\right)}{\log\left(\frac{1}{\gamma}\right)}, \label{eqn:outerloop}
	\end{align}
	we are guaranteed that $\sin\theta_t \le \tan\theta_t \leq \epsilon$.
	This number of iterations $\frac{2}{\rho}\log\left(\frac{1}{\epsilon\cos\theta_0}\right)$ could be further decompose into two phase: 1) initial phase $\frac{2}{\rho}\log\frac{1}{\cos\theta_0}$ which mainly caused by large initial angle, 2) convergence phase $\frac{2}{\rho}\log\frac{1}{\epsilon}$ which is mainly due to the high accuracy $\epsilon$ we need.

	We now focus on how to obtain the iterate $\w_{t+1}$ using accelerated gradient descent such that the error $\xi$ has norm bounded as in~\eqref{eqn:xibound}.

	Let $f(\w) \defeq \frac{1}{2} \w \trans \B \w - \w \trans \A \w_t$ and recall that in each iteration, we use linear system solver to solve the following optimization problem:
	\begin{align}
		\min_{\w} f(\w).\label{prob:innloopopt}
	\end{align}
	The minimizer of \eqref{prob:innloopopt} is $\Binv \A \w_t$.
	Define $\epsilon_{\textrm{init}}$ and $\epsilon_{\textrm{des}}$ as initial error and required destination error of linear system solver $\norm{\w - \B^{-1}\A\w_t}^2_\B$.
	Observe that for any $\w$ we have equality,
	\begin{equation}
	 \norm{\w - \B^{-1}\A\w_t}^2_\B = 2(f(\w) - f(\B^{-1}\A\w_t))
	\end{equation}
	Eq.(\ref{eqn:xibound}) directly poses a condition on $\epsilon_{\textrm{des}}$:
	\begin{equation*}
	\epsilon_{\textrm{des}} \le \frac{(|\lambda_1| - |\lambda_2|)^2}{16}\min\{ \cos^2 \theta_t,   \sin^2 \theta_t \}
	\end{equation*}

	Since we initialize Algorithm~\ref{algo:AGD} with $\beta_t \w_t$, where $\beta_t \defeq \frac{\w_t \trans \A \w_t}{\w_t \trans \B \w_t}$, the initial error can be bounded as follows:
	\begin{align*}
	\epsilon_{\textrm{init}} &= 2( f(\beta_t \w_t) - f(\Binv \A \w_t)) \\
	&= 2(\min_\beta f(\beta \w_t) - f(\Binv \A \w_t))
	\leq 2( f(\lambda_1 \w_t) - f(\Binv \A \w_t)) \\
	&= \norm{\lambda_1 \w_t - \Binv \A \w_t }_{\B}^2 \\
	&= \sum_{i\geq 2} (\lambda_1 - \lambda_i)^2\left(\w_t \trans \B \v_i \right)^2
	\leq \lambda_1^2 (1-\left(\w_t \trans \B \v_1 \right)^2) = \lambda_1^2 \sin^2 \theta_t.
	\end{align*}
	This means that we wish to decrease the ratio of final to initial error smaller than
	\begin{align}
		\frac{\epsilon_{\textrm{des}}}{\epsilon_{\textrm{init}}}
		&\leq \frac{(|\lambda_1| - |\lambda_2|)^2}{16}\min\{ \cos^2 \theta_t,   \sin^2 \theta_t \} \times \frac{1}{\lambda_1^2 \sin^2 \theta_t}
		= \frac{\rho^2}{16} \min\left\{\frac{1}{\tan^2\theta_t},1\right\}.\label{eqn:innlooperror}
	\end{align}
	Recall we defined $\Tcal{\delta}$ as the time for linear system solver to reduce the error by a factor $\delta$. Therefore, in the initial phase where $\theta_t$ is large, 
	it would be suffice to solve linear system up to factor $\delta = \frac{\rho^2\cos^2\theta_0}{16} 
	\le \frac{\rho^2}{16\tan\theta_t}$. In convergence phase, where $\theta_t$ is small, choose $\delta = \frac{\rho^2}{16}$ would be sufficient. 

	Therefore, adding the computational cost of Algorithm \ref{algo:appro-power} other than by linear system solver, it's not hard to get the total running time will be bounded by
	\begin{align*}
	&\frac{2}{\rho} \left( \log\frac{1}{\cos \theta_0} \cdot  \Tcal{\frac{\rho^2\cos^2\theta_0}{16} }+\log \frac{1}{\epsilon} \cdot \Tcal{\frac{\rho^2}{16}} \right) + \frac{2}{\rho} (\nnz{\A}+\nnz{\B}+d) \log \frac{1}{\epsilon \cos \theta_0}.
	\end{align*}

	Furthermore, if we run Nesterov's accelerated gradient descent (Algorithm~\ref{algo:AGD}) on function $f(\w)$ to solve the linear systems. Since the condition number of the optimization problem~\eqref{prob:innloopopt} is $\cn(\B)$, by Theorem \ref{thm:AGD_guarantee}, we know $\Tcal{\delta} = O(\nnz{\B}\sqrt{\cn(\B)}\log \frac{1}{\delta})$. Substituting this gives runtime:
	\begin{align*}
	&O\left( \frac{ \nnz{\B} \sqrt{\cn(\B)}}{\rho} \left( \log\frac{1}{\cos \theta_0} \log \frac{1}{\rho \cos \theta_0}+\log \frac{1}{\epsilon} \log \frac{1}{\rho} \right) + \frac{1}{\rho} \nnz{\A} \log \frac{1}{\epsilon \cos \theta_0}\right).
	\end{align*}
	which finishes the proof.
\end{proof}

\subsection{Top-k Setting}
To prove the convergence of subspace, we need a notion of angle between subspaces. The standard definition the is principal angles.
\begin{definition}[Principal angles]
Let $\mathcal{X}$ and $\mathcal{Y}$ be subspaces of $\R^d$ of dimension at least $k$. The principal angles $0 \le \theta^{(1)} \le \cdots \le \theta^{(k)}$ between $\mathcal{X}$ and $\mathcal{Y}$ with respect to $\B$-based scalar product are defined recursively via:
\begin{align*}
\theta^{(i)}(\mathcal{X}, \mathcal{Y}) = \min \{\arccos(\frac{\langle \x, \y \rangle_\B}{\norm{\x}_\B \norm{\y}_\B}):
\x \in \mathcal{X}, \y \in \mathcal{Y}, \x \perp_\B \x_{j}, \y \perp_\B \y_{j} \text{~for all~} j<i \} \\
(\x_{i},\y_{i}) \in \argmin \{\arccos(\frac{\langle \x, \y \rangle_\B}{\norm{\x}_\B \norm{\y}_\B}):
\x \in \mathcal{X}, \y \in \mathcal{Y}, \x \perp_\B \x_{j}, \y \perp_\B \y_{j} \text{~for all~} j<i \}
\end{align*}
For matrices $\X$ and $\Y$, we use $\theta_{j}(\X, \Y)$ to denote the $j$-th principal angle between their range.
\end{definition}

Since for our interest, we only care the largest principal angle, thus, in the following proof, without ambiguity, for $\X, \Y \in \R^{d\times k}$, we use $\theta(\X, \Y)$ to indicate $\theta^{(k)}(\X, \Y)$.
Next lemma will tells us this definition of $\theta(\X, \Y)$ to be the largest principal angle is same as what we defined in the main paper Definition \ref{def:lp_angle}.

\begin{lemma}
Let $\X, \Y \in \R^{d\times k}$ be orthonormal bases (w.r.t $\B$) for subspace $\mathcal{X}, \mathcal{Y}$ respectively. Let $\X_\perp$ be an orthonormal basis for orthogonal complement of $\mathcal{X}$ (w.r.t $\B$).
Then we have 
\begin{equation}
\cos \theta(\mathcal{X}, \mathcal{Y}) = \sigma_{k}(\X\trans \B \Y), \quad
\sin \theta(\mathcal{X}, \mathcal{Y}) = \norm{\X_\perp\trans \B\Y}
\end{equation}
and assuming $\X\trans \B \Y$ is invertible ($\theta(\mathcal{X}, \mathcal{Y})<\frac{\pi}{2}$), we have:
\begin{equation}
\tan \theta(\mathcal{X}, \mathcal{Y}) = \norm{\X_\perp \trans \B \Y(\X \trans \B \Y)^{-1}}
\end{equation}
\end{lemma}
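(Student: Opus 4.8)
The plan is to reduce everything to standard facts about principal angles in the $\B$-inner-product by transporting to the Euclidean setting via $\B^{1/2}$. Set $\Xtil = \B^{1/2}\X$, $\Ytil = \B^{1/2}\Y$, and $\Xtil_\perp = \B^{1/2}\X_\perp$. Because $\X,\Y$ are $\B$-orthonormal bases of $\mathcal{X},\mathcal{Y}$ and $\X_\perp$ is a $\B$-orthonormal basis of the $\B$-orthogonal complement of $\mathcal{X}$, the matrices $\Xtil,\Ytil,\Xtil_\perp$ are genuine (Euclidean) orthonormal bases: e.g. $\Xtil\trans\Xtil = \X\trans\B\X = \eye$, and $[\Xtil\ \Xtil_\perp]$ is a $d\times d$ orthogonal matrix. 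Moreover, for $\x\in\mathcal{X}$, $\y\in\mathcal{Y}$ one has $\langle\x,\y\rangle_\B = (\B^{1/2}\x)\trans(\B^{1/2}\y)$ and $\norm{\x}_\B = \norm{\B^{1/2}\x}_2$, and $\x\perp_\B\x_j \iff \B^{1/2}\x \perp_2 \B^{1/2}\x_j$; hence the recursive definition of the $\B$-principal angles between $\mathcal{X},\mathcal{Y}$ coincides verbatim with the Euclidean principal angles between $\mathrm{range}(\Xtil)$ and $\mathrm{range}(\Ytil)$. So $\theta(\mathcal{X},\mathcal{Y}) = \theta^{(k)}_{\mathrm{Euc}}(\Xtil,\Ytil)$.

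Next I invoke the classical characterization of Euclidean principal angles (see, e.g., Golub--Van Loan, already cited in the paper as \cite{golub2012matrix}): if $\Xtil,\Ytil$ are orthonormal bases, the cosines of the principal angles $\theta^{(1)}\le\cdots\le\theta^{(k)}$ are exactly the singular values $\sigma_1\ge\cdots\ge\sigma_k$ of $\Xtil\trans\Ytil$. In particular $\cos\theta^{(k)} = \sigma_k(\Xtil\trans\Ytil) = \sigma_k(\X\trans\B\Y)$, which is the first claim. For the sine, write the orthogonal decomposition $\Ytil = \Xtil(\Xtil\trans\Ytil) + \Xtil_\perp(\Xtil_\perp\trans\Ytil)$ coming from $\eye = \Xtil\Xtil\trans + \Xtil_\perp\Xtil_\perp\trans$. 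Then for any unit vector $\z$, $\norm{\Ytil\z}_2^2 = \norm{\Xtil\trans\Ytil\z}_2^2 + \norm{\Xtil_\perp\trans\Ytil\z}_2^2 = 1$, so the squared singular values of $\Xtil_\perp\trans\Ytil$ are $1$ minus those of $\Xtil\trans\Ytil$; taking the largest gives $\norm{\Xtil_\perp\trans\Ytil}_2 = \sqrt{1-\sigma_k(\Xtil\trans\Ytil)^2} = \sin\theta^{(k)}$. Translating back, $\norm{\X_\perp\trans\B\Y}_2 = \sin\theta(\mathcal{X},\mathcal{Y})$, the second claim. (One must note $\Xtil_\perp\trans\Ytil$ is $(d-k)\times k$ so its operator norm is its largest singular value — fine.)

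For the tangent, assume $\theta(\mathcal{X},\mathcal{Y})<\pi/2$ so $\sigma_k(\X\trans\B\Y)>0$ and $\Xtil\trans\Ytil$ is invertible. I would take an SVD $\Xtil\trans\Ytil = P\,\Sigma\,Q\trans$ with $\Sigma = \mathrm{diag}(\cos\theta^{(1)},\dots,\cos\theta^{(k)})$, and observe that the same right singular vectors $Q$ diagonalize $\Xtil_\perp\trans\Ytil$ in the sense that $\Xtil_\perp\trans\Ytil = P'\,\Sigma'\,Q\trans$ with $\Sigma'$ having entries $\sin\theta^{(i)}$ — this is exactly the CS decomposition of the orthogonal matrix $[\Xtil\ \Xtil_\perp]\trans\Ytil$, or can be gotten elementarily from the pairing argument above. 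Then
\begin{align*}
\Xtil_\perp\trans\Ytil\,(\Xtil\trans\Ytil)^{-1} = P'\,\Sigma'\,Q\trans\,(P\,\Sigma\,Q\trans)^{-1} = P'\,\Sigma'\,\Sigma^{-1}\,P\trans,
\end{align*}
whose operator norm is $\max_i \sin\theta^{(i)}/\cos\theta^{(i)} = \tan\theta^{(k)}$, giving $\tan\theta(\mathcal{X},\mathcal{Y}) = \norm{\X_\perp\trans\B\Y(\X\trans\B\Y)^{-1}}$. Finally I check that this third definition agrees with the paper's Definition~\ref{def:lp_angle}: that definition sets $\theta(\mathcal{W},\mathcal{V}) = \arccos(\sigmin(\V\trans\B\W))$, and $\sigmin(\V\trans\B\W) = \sigma_k(\V\trans\B\W) = \cos\theta^{(k)}$ by the first identity, so the two coincide.

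The main obstacle is not any single hard estimate but getting the bookkeeping right: verifying carefully that the $\B^{1/2}$-transport sends $\B$-orthonormal bases and $\B$-orthogonality to their Euclidean analogues so that the recursive min-max definition of principal angles is genuinely preserved, and then handling the non-square shapes (so "operator norm = largest singular value" is used correctly) and the invertibility hypothesis for the tangent identity. The SVD/CS-decomposition step for the tangent is the only place requiring a small structural argument (that $\Xtil\trans\Ytil$ and $\Xtil_\perp\trans\Ytil$ share right singular vectors); everything else is a direct application of \cite{golub2012matrix}.
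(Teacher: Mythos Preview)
Your proof is correct. Both your argument and the paper's reach the same identities, but by somewhat different routes. You transport everything to the Euclidean setting via $\B^{1/2}$ and then invoke classical principal-angle and CS-decomposition facts; the paper instead works directly in the $\B$-inner product, using the projection identity $\X\X\trans\B + \X_\perp\X_\perp\trans\B = \eye$ to compute Gram matrices. For the sine identity the two arguments are essentially the same after the change of variables (both boil down to $(\X_\perp\trans\B\Y)\trans(\X_\perp\trans\B\Y) = \eye - (\X\trans\B\Y)\trans(\X\trans\B\Y)$). For the tangent, however, the paper's computation is more elementary than yours: it simply forms
\[
[\X_\perp\trans\B\Y(\X\trans\B\Y)^{-1}]\trans[\X_\perp\trans\B\Y(\X\trans\B\Y)^{-1}]
= [(\X\trans\B\Y)^{-1}]\trans(\X\trans\B\Y)^{-1} - \eye
\]
and reads off the operator norm as $1/\sigma_k(\X\trans\B\Y)^2 - 1 = \tan^2\theta$, with no appeal to the CS decomposition or to shared right singular vectors. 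Your route buys a cleaner conceptual picture (everything reduces to known Euclidean statements from \cite{golub2012matrix}), at the cost of invoking a bit more structure for the tangent step; the paper's route is self-contained and shorter there.
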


\begin{proof}
By definition of principal angle, it's easy to show $\cos \theta(\mathcal{X}, \mathcal{Y}) = \sigma_{k}(\X\trans \B \Y)$. The projection operator onto subspace $\mathcal{X}$ is $\X\X\trans \B$. It's also easy to show $\X\X\trans \B + \X_\perp \X_\perp\trans \B = \I$
Then, we have:
\begin{align}
&(\X_\perp\trans \B\Y)\trans \X_\perp\trans \B\Y
=\Y\trans \B \X_\perp \X_\perp\trans \B\Y \nn \\
&= \Y\trans \B (\I - \X\X\trans \B)\Y
= \Y\trans \B \Y - (\X\trans \B\Y)\trans(\X\trans \B\Y)
= \I - (\X\trans \B\Y)\trans(\X\trans \B\Y)
\end{align}
Therefore:
\begin{equation}
\norm{\X_\perp\trans \B\Y}^2 = 1-\sigma^2_{k}(\X\trans \B \Y) = 1-\cos^2 \theta(\mathcal{X}, \mathcal{Y})
=\sin^2 \theta(\mathcal{X}, \mathcal{Y})
\end{equation}
Similarily:
\begin{align}
&[\X_\perp \trans \B \Y(\X \trans \B \Y)^{-1}]\trans \X_\perp \trans \B \Y(\X \trans \B \Y)^{-1} \nn \\
=&[(\X \trans \B \Y)^{-1}]\trans [\I - (\X\trans \B\Y)\trans(\X\trans \B\Y)](\X \trans \B \Y)^{-1} \nn \\
=& [(\X \trans \B \Y)^{-1}]\trans (\X \trans \B \Y)^{-1} - \I
\end{align}
Therefore:
\begin{equation}
\norm{\X_\perp \trans \B \Y(\X \trans \B \Y)^{-1}}^2
= \frac{1}{\sigma^2_{k}(\X\trans \B \Y)} - 1=
\frac{1}{\cos^2 \theta(\mathcal{X}, \mathcal{Y})} - 1 = \tan^2 \theta(\mathcal{X}, \mathcal{Y})
\end{equation}
Obviously, $\theta(\mathcal{X}, \mathcal{Y})$ is acute, thus $\sin \theta(\mathcal{X}, \mathcal{Y})>0$
and $\tan \theta(\mathcal{X}, \mathcal{Y}) >0$, which finishes the proof.
\end{proof}

Similar to the top one case, for simplicity, we denote $\theta_t \defeq \theta\left(\W_t,\V\right)$, where $\V \in \R^{d\times k}$ is top $k$ eigen-vector of generalized eigenvalue problem. Now we are ready to prove the theorem. We also denote $\Vc \in \R^{d\times (d-k)}$.
Also throughout the proof, for any matrix $\X$, we use notation $\norm{\X}_\B \equiv \norm{\B^{\frac{1}{2}}\X} \equiv \sqrt{\norm{\X\trans \B \X}}$ and $\norm{\X}_{\B, F} = \norm{\B^\frac{1}{2}\X}_F = \sqrt{\tr(\X\trans \B \X)}$.


\begin{proof}[Proof of Theorem~\ref{thm:main}]
Let $\V \in \R^{d\times k}, \Lambda_\V \in \R^{k\times k}$ be the top $k$ generalized eigen-pairs; and $\Vc \in \R^{d\times (d-k)}, \Lambda_{\Vc} \in \R^{(d-k) \times (d-k)}$ be the remaining $(d-k)$ generalized eigen-pairs (assume all eigen-vectors normalized w.r.t. $\B$). Then, we have:
\begin{align*}
\A &=  \B(\V \Lambda_\V \V\trans + \Vc \Lambda_{\Vc} \Vc\trans)\B \\
\B &=  \B(\V\V\trans + \Vc\Vc\trans)\B
\end{align*}

By approximately solving $\argmin_{\W \in \R^{d\times k}} \tr(\frac{1}{2}\W\trans\B\W - \W\trans\A\W_t)$
and Gram-Schmidt process, we have:
\begin{equation}
\W_{t+1} = (\B^{-1}\A \W_t + \xi)\mat{R}
\end{equation}
where $\mat{R} \in \R^{k \times k}$ is an invertable matrix generated by Gram-Schmidt process.

We will follow the same strategy as in top 1 case, which will first prove the geometric convergence of $\tan\theta_t$ assuming
\begin{equation}
\norm{\xi}_\B \le \frac{|\lambda_k| - |\lambda_{k+1}|}{4}\min\{ \sin\theta_t, \cos\theta_t\}
\label{xi_requirement_k}
\end{equation}
Note here $\xi$ is a matrix, and $\norm{\xi}_\B = \norm{\B^{\frac{1}{2}}\xi} = \sqrt{\norm{\xi\trans \B \xi}}$.
Then we will bound the time taken by black-box linear system solver to provide such an accuracy.

By definition of $\tan\theta_t$ and linear algebra calculation, we have
\begin{align}
\tan \theta_{t+1} &= \norm{\Vc\trans \B \W_{t+1} (\V\trans \B \W_{t+1})^{-1}}\nn\\
& = \norm{\Vc\trans \B \tilde{\W}_{t+1} (\V\trans \B \tilde{\W}_{t+1})^{-1}} \nn\\
& = \norm{(\Lambda_{\Vc}\Vc\trans \B\W_{t} + \Vc\trans \B \xi)(\Lambda_{\V}\V\trans \B \W_{t} + \V\trans \B\xi)^{-1}} \nn\\
&\le  \frac{\norm{(\Lambda_{\Vc}\Vc\trans \B\W_{t} + \Vc\trans \B \xi)(\V\trans \B \W_{t})^{-1}}}{\sigma_k(\Lambda_{\V} + \V\trans \B\xi(\V\trans \B \W_{t})^{-1})} \nn\\
&\le
\frac{\norm{\Lambda_{\Vc}} \tan \theta_t + \norm{\Vc\trans \B \xi(\V\trans \B \W_{t})^{-1}}}
{\sigma_k(\Lambda_{\V}) - \norm{\V\trans \B\xi(\V\trans \B \W_{t})^{-1}}} \nn\\
&\le
\frac{\norm{\Lambda_{\Vc}} \tan \theta_t + \norm{\Vc\trans \B \xi}\norm{\V\trans \B \W_{t})^{-1}}}
{\sigma_k(\Lambda_{\V}) - \norm{\V\trans \B\xi}\norm{(\V\trans \B \W_{t})^{-1}}} \nn\\
&=\frac{\norm{\Lambda_{\Vc}} \tan \theta_t + \frac{\norm{\Vc\trans \B \xi}}{\cos \theta_t}}
{\sigma_k(\Lambda_{\V}) - \frac{\norm{\V\trans \B\xi}}{\cos \theta_t}} \nn\\
&\le \tan \theta_t \frac{|\lambda_{k+1}| + \frac{\norm{\xi}_\B}{\sin\theta_t}  }
{|\lambda_{k}| - \frac{\norm{\xi}_\B}{\cos\theta_t} }
\end{align}
Since $\norm{\xi}_\B \le \frac{|\lambda_k| - |\lambda_{k+1}|}{4}\min\{ \sin\theta_t, \cos\theta_t\}$, we have that:
\begin{align}
\tan \theta_{t+1} &\le 
\frac{|\lambda_{k}| + 3|\lambda_{k+1}|}{3|\lambda_k| + |\lambda_{k+1}|} \tan \theta_t \\
&= (1- \frac{2(|\lambda_k| - |\lambda_{k+1}|)}{3|\lambda_k| + |\lambda_{k+1}| }) \tan\theta_t
\le \exp(-\frac{|\lambda_k| - |\lambda_{k+1}|}{2|\lambda_k|})\tan\theta_t
\end{align}
Recall in this problem $\rho = 1-\frac{\abs{\lambda_{k+1}}}{\abs{\lambda_k}}$, therefore, we know:
\begin{equation}
\sin \theta_t \le \tan \theta_t \le \exp(- \frac{\rho}{2} \cdot t)
\tan \theta_0
\le \exp(- \frac{\rho}{2} \cdot t) \frac{1}{\cos \theta_0}
\end{equation}
If we want $\sin\theta_t \le \epsilon$, which gives iterations:
\begin{equation}
t \ge \frac{2}{\rho}
\log \frac{1}{\epsilon \cos\theta_0}
\end{equation}

Let $f(\W) = \tr(\frac{1}{2}\W\trans\B\W - \W\trans\A\W_t)$. For this problem, we can view $\W$ as a $dk$ dimensional vector, and use linear system to solve this $d$, $k$ dimensional problem.
Therefore, if we represent $\W$ in terms of matrix, the corresponding linear system error is 
$\norm{\W - \B^{-1}\A\W_t}_{\B, F}$, recall $\norm{\W}_{\B, F} = \norm{\B^\frac{1}{2}\W}_F = \sqrt{\tr(\W\trans \B \W)}$. To satisfy the accuracy requirement, we only need
\begin{equation}
\epsilon_{\textrm{des}} = \norm{\xi}^2_{\B, F} \le \frac{(|\lambda_k| - |\lambda_{k+1}|)^2}{16}\min\{\sin^2\theta_t ,   \cos^2\theta_t\}
\end{equation}
Recall we initialize the linear system solver with $\W_t\Gamma_t$ with $\Gamma_t = (\W_t\trans \B \W_t)^{-1} (\W_t \trans \A \W_t)$, we then have
\begin{align}
 \epsilon_{\textrm{init}}& = \norm{\W_t\Gamma_t - \B^{-1}\A \W_t}_{\B, F}^2 = 
 \tr[(\W_t\Gamma_t - \B^{-1}\A \W_t)\trans \B (\W_t\Gamma_t - \B^{-1}\A \W_t)] \nn\\
= &2[f(\W_t\Gamma_t) - f(\B^{-1}\A \W_t)] = 2[\argmin_{\Gamma\in \R^{k\times k}} f(\W_t\Gamma) - f(\B^{-1}\A \W_t)]
\end{align}
Let $\hat{\Gamma}_t = (\V\trans \B \W_t)^{-1} \Lambda_\V (\V\trans \B \W_t)$, and observe
$\norm{\xi}_{\B, F}^2 = \norm{\B^{\frac{1}{2}}\xi}_F^2 = \norm{\V\trans \B\xi}_F^2
+ \norm{\Vc \trans \B\xi}_F^2$ (Pythagorean theorem under $\B$ norm), then we have:
\begin{align}
 \epsilon_{\textrm{init}} = &\norm{\W_t\Gamma_t - \B^{-1}\A \W_t}_{\B, F}^2= 2[\argmin_{\Gamma\in \R^{k\times k}} f(\W_t\Gamma) - f(\B^{-1}\A \W_t)] \nn \\
\le& 2[f(\W_t\hat{\Gamma}_t) - f(\B^{-1}\A \W_t)] 
= \norm{\W_t\hat{\Gamma}_t - \B^{-1}\A \W_t}_{\B, F}^2 \nn \\
= & \norm{\V\trans\B  (\W_t\hat{\Gamma}_t - \B^{-1}\A \W_t)}_F^2 + \norm{\Vc\trans\B  (\W_t\hat{\Gamma}_t - \B^{-1}\A \W_t)}_F^2  \nn\\
= & \norm{\V\trans \B \W_t \hat{\Gamma}_t - \Lambda_\V \V\trans \B \W_t}_F^2
+ \norm{\Vc\trans \B \W_t \hat{\Gamma}_t - \Lambda_{\Vc} \Vc\trans \B \W_t}_F^2
\nn \\
= & 0
+ \norm{\Vc\trans \B \W_t \hat{\Gamma}_t - \Lambda_{\Vc} \Vc\trans \B \W_t}^2_F \nn \\
\le &k\norm{\Vc\trans \B \W_t \hat{\Gamma}_t - \Lambda_{\Vc} \Vc\trans \B \W_t}^2 \nn \\
\le &2k\sin^2\theta_t(\norm{\hat{\Gamma}_t}^2 + \norm{\Lambda_{\Vc}}^2) 
\le 4k|\lambda_1|^2 \tan^2\theta_t
\end{align}

The last step is correct since $\norm{\Lambda_{\Vc}} \le |\lambda_1|$ and
$\norm{\hat{\Gamma}_t} \le \norm{(\V\trans \B \W_t)^{-1}} \norm{\Lambda_\V} \norm{\V\trans \B^\frac{1}{2}}\norm{\B^\frac{1}{2} \W_t} \le \frac{1}{\cos \theta_t}|\lambda_1|$

This means we wish to decrease the ratio of final to initial error smaller than:
\begin{align}
\frac{\epsilon_{\textrm{des}}}{\epsilon_{\textrm{init}}} 
\le & \frac{\rho^2}{64k\gamma^2}\min\{\frac{1}{\cos^2 \theta_t}, \frac{\sin^2 \theta_t}{\cos^4 \theta_t}\} 
\end{align}
where $\gamma = \frac{\abs{\lambda_1}}{\abs{\lambda_k}}$.
Therefore, a two phase analysis of running time depending on $\theta_t$ is large or small similar to top 1 case
would gives the total runtime:
\begin{align*}
&\frac{2}{\rho} \left( \log\frac{1}{\cos \theta_0} \cdot  \Tcal{\frac{\rho^2 \cos^4 \theta_0}{64 k \gamma^2 }} +\log \frac{1}{\epsilon} \cdot \Tcal{\frac{\rho^2}{64 k \gamma^2}} \right) + \frac{2}{\rho} \left(\nnz{\A} k+\nnz{\B}k + dk^2\right) \log\frac{1}{\epsilon \cos \theta_0},
\end{align*}
if we are using the accelerated gradient descent to solve the linear system, we are essentially solve $k$ disjoint optimization problem, with each problem dimension $d$ and condition number $\cn(\B)$. Directly apply Theorem \ref{thm:AGD_guarantee} gives runtime
\begin{align*}
&O\left( \frac{ \nnz{\B}k \sqrt{\cn(\B)}}{\rho} \left( \log\frac{1}{\cos \theta_0} \log \frac{k\gamma}{\rho\cos \theta_0}  +\log \frac{1}{\epsilon} \log \frac{k\gamma}{\rho} \right)  + \frac{\left(\nnz{\A}k+dk^2\right)}{\rho} \log\frac{1}{\epsilon \cos \theta_0} \right).
\end{align*}
\end{proof}

Finally, since both results Theorem \ref{thm:main} and Theorem \ref{thm:main_cca} are stated in terms of initialization $\theta_0$, here we will give probablistic guarantee for random initialization.
\begin{lemma}[Random Initialization]\label{lem:randinit}
Let top $k$ eigen-vector be $\V \in \R^{d\times k}$, and the remaining eigen-vector be $\Vc \in \R^{d\times (d-k)}$. If we initialize $\W_0$ as in Algorithm \ref{algo:appro-ortho}, then
With at least probability $1-\eta$, we have:
\begin{equation}
\tan \theta_0 = \norm{\Vc\trans \B \W_0 (\V\trans \B \W_0)^{-1}} \le O(\frac{\sqrt{\cn(\B)dk}}{\eta})
\end{equation}
\end{lemma}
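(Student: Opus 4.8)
The plan is to reduce the bound on $\tan\theta_0 = \norm{\Vc\trans \B \W_0 (\V\trans \B \W_0)^{-1}}$ to two separate estimates: an upper bound on $\norm{\Vc\trans \B \W_0}$ and an upper bound on $\norm{(\V\trans \B \W_0)^{-1}} = 1/\sigma_k(\V\trans\B\W_0)$. Recall that $\W_0 = \mathrm{GS}_\B(\tilde\W_0)$ where $\tilde\W_0$ has i.i.d.\ $\mathcal{N}(0,1)$ entries; the Gram--Schmidt step multiplies $\tilde\W_0$ on the right by some invertible $k\times k$ matrix $\mat{R}$, and since that same $\mat{R}$ appears in both $\Vc\trans\B\W_0 = \Vc\trans\B\tilde\W_0\mat{R}$ and $\V\trans\B\W_0 = \V\trans\B\tilde\W_0\mat{R}$, it cancels in the product $\Vc\trans\B\W_0(\V\trans\B\W_0)^{-1} = \Vc\trans\B\tilde\W_0(\V\trans\B\tilde\W_0)^{-1}$. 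So it suffices to work directly with the Gaussian matrix $\tilde\W_0$ and bound $\norm{\Vc\trans\B\tilde\W_0}$ and $1/\sigma_k(\V\trans\B\tilde\W_0)$.

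The key observation is a change of variables making the relevant projections orthogonal. Write $\B = \B^{1/2}\B^{1/2}$; since $\V$ and $\Vc$ are $\B$-orthonormal and together span $\R^d$, the matrices $\U := \B^{1/2}\V$ and $\U_\perp := \B^{1/2}\Vc$ have orthonormal columns and $[\U\ \U_\perp]$ is an orthogonal $d\times d$ matrix (this is essentially Lemma~\ref{lem:eig-sing} and the identity $\V\V\trans\B + \Vc\Vc\trans\B = \I$ used in the top-$k$ proof). Then $\V\trans\B\tilde\W_0 = \U\trans(\B^{1/2}\tilde\W_0)$ and $\Vc\trans\B\tilde\W_0 = \U_\perp\trans(\B^{1/2}\tilde\W_0)$. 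The trouble is that $\B^{1/2}\tilde\W_0$ is not Gaussian with identity covariance — it is Gaussian with covariance $\B$ per column. I would handle this by writing $\B^{1/2}\tilde\W_0 = \B^{1/2}\tilde\W_0$ and using $\singmin{\B^{1/2}}\,\norm{\cdot} \le \norm{\B^{1/2}(\cdot)} \le \singmax{\B^{1/2}}\,\norm{\cdot}$ together with the fact that $\U\trans\tilde\W_0$ and $\U_\perp\trans\tilde\W_0$ are themselves standard Gaussian matrices (of sizes $k\times k$ and $(d-k)\times k$ respectively, with independent entries since $\U,\U_\perp$ have orthonormal columns). Concretely: $\sigma_k(\V\trans\B\tilde\W_0) \ge \singmin{\B^{1/2}}\,\sigma_k(\U\trans\tilde\W_0)$ and $\norm{\Vc\trans\B\tilde\W_0} \le \singmax{\B^{1/2}}\,\norm{\U_\perp\trans\tilde\W_0}$, so $\tan\theta_0 \le \sqrt{\cn(\B)}\cdot \norm{\U_\perp\trans\tilde\W_0}/\sigma_k(\U\trans\tilde\W_0)$.

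It then remains to invoke standard non-asymptotic random matrix estimates: for a $k\times k$ standard Gaussian $\mat{G}$, $\sigma_{\min}(\mat{G}) \gtrsim \eta/\sqrt{k}$ with probability $\ge 1-\eta/2$, and for an $(d-k)\times k$ standard Gaussian $\mat{H}$, $\norm{\mat{H}} \lesssim \sqrt{d}$ with probability $\ge 1-\eta/2$ (a cruder $\norm{\mat{H}}_F \le \sqrt{(d-k)k}\cdot\mathrm{polylog}$ type bound, or a Gaussian concentration bound, also suffices and avoids sharp constants). Combining via a union bound gives $\tan\theta_0 \le O(\sqrt{\cn(\B)}\cdot \sqrt{d}\cdot \sqrt{k}/\eta) = O(\sqrt{\cn(\B)dk}/\eta)$ with probability $\ge 1-\eta$, as claimed. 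The main obstacle, such as it is, is bookkeeping the $\B^{1/2}$ change of variables correctly — confirming that $\U\trans\tilde\W_0$ and $\U_\perp\trans\tilde\W_0$ are genuinely standard Gaussian (rotation-invariance of the i.i.d.\ Gaussian ensemble applied columnwise) and that the $\cn(\B)$ factor enters exactly once; the tail bounds themselves are off-the-shelf. One should also note the slight discrepancy in the failure-probability variable ($\eta$ here versus $\zeta$ in the remarks following Theorems~\ref{thm:main} and~\ref{thm:main_k}) but this is immaterial.
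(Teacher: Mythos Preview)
There is a genuine gap. Your two key inequalities
\[
\sigma_k(\V\trans\B\tilde\W_0) \;\ge\; \singmin{\B^{1/2}}\,\sigma_k(\U\trans\tilde\W_0),
\qquad
\norm{\Vc\trans\B\tilde\W_0} \;\le\; \singmax{\B^{1/2}}\,\norm{\U_\perp\trans\tilde\W_0}
\]
are false as stated. Writing $\V\trans\B\tilde\W_0 = \U\trans(\B^{1/2}\tilde\W_0)$ puts $\B^{1/2}$ \emph{between} $\U\trans$ and $\tilde\W_0$, and the norm inequality $\singmin{\B^{1/2}}\norm{\x}\le\norm{\B^{1/2}\x}\le\singmax{\B^{1/2}}\norm{\x}$ does not let you move it past the projection $\U\trans$. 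A concrete counterexample: take $d=2$, $k=1$, $\U=e_1$, $\U_\perp=e_2$, and let $\B^{1/2}$ be PSD with two distinct eigenvalues but eigenvectors rotated $45^\circ$ from the standard axes. With $\tilde\W_0=e_1$ you get $\U_\perp\trans\tilde\W_0=0$ while $\U_\perp\trans\B^{1/2}\tilde\W_0\neq 0$, so the second inequality fails; a symmetric choice breaks the first.

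The paper fixes this by factoring differently: take the SVD $\V\trans\B = L_\V\,\Sigma_\V\,\tilde\V\trans$ (and likewise $\Vc\trans\B = L_{\Vc}\,\Sigma_{\Vc}\,\tilde\Vc\trans$), where $\tilde\V\in\R^{d\times k}$ and $\tilde\Vc\in\R^{d\times(d-k)}$ have orthonormal columns. Then $\V\trans\B\tilde\W_0 = L_\V\Sigma_\V(\tilde\V\trans\tilde\W_0)$, so the $\B$-dependent piece $\Sigma_\V$ multiplies the $k\times k$ Gaussian $\tilde\V\trans\tilde\W_0$ on the \emph{left}, where $\sigma_k(\Sigma_\V\,G)\ge\sigma_k(\Sigma_\V)\,\sigma_k(G)$ is legitimate. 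The resulting deterministic ratio is $\norm{\Vc\trans\B}/\sigma_k(\V\trans\B)$, which one then bounds by $\sqrt{\cn(\B)}$ using exactly your observation that $\V\trans\B^{1/2}=\U\trans$ and $\Vc\trans\B^{1/2}=\U_\perp\trans$ are orthonormal. The random-matrix tail bounds you cite apply to $\tilde\V\trans\tilde\W_0$ and $\tilde\Vc\trans\tilde\W_0$ (standard Gaussian by rotation invariance), and the rest of your outline---the $\mat{R}$-cancellation, the union bound, the final $O(\sqrt{\cn(\B)dk}/\eta)$---goes through unchanged.
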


\begin{proof}
Recall $\tilde{\W}$ is entry-wise sampled from standard Gaussian, and 
\begin{align}
\tan \theta_0 = &\norm{\Vc\trans \B \W_0 (\V\trans \B \W_0)^{-1}} 
= \norm{\Vc\trans \B \tilde{\W}_0 (\V\trans \B \tilde{\W}_0)^{-1}}  \le
\frac{\norm{\Vc\trans \B \tilde{\W}_0 }}{\sigma_k(\V\trans \B \tilde{\W}_0)  } \nn \\
\le &\frac{\norm{\Vc\trans \B\tilde{\Vc}}}{\sigma_k(\V\trans \B\tilde{\V})} \frac{\norm{\tilde{\Vc}\trans \tilde{\W}_0}}{\sigma_k(\tilde{\V}\trans \tilde{\W}_0)}
\end{align}
Where $\tilde{\Vc}, \tilde{\V}$ are the right singular vectors of $\Vc\trans \B, \V\trans \B$ respectively.
Then, we have first term:
\begin{align}
\frac{\norm{\Vc\trans \B\tilde{\Vc}}}{\sigma_k(\V\trans \B\tilde{\V})} = \frac{\norm{\Vc\trans \B}}{\sigma_k(\V\trans \B)} \le \frac{\norm{\Vc\trans \B^{\frac{1}{2}}}\norm{\B^{\frac{1}{2}}}}{\sigma_k(\V\trans \B^\frac{1}{2})\sigma_{\min}(\B^\frac{1}{2})} = \cn(\B)^\frac{1}{2}
\end{align}
The last step is true since both $\Vc\trans \B^\frac{1}{2}$ and $\V\trans \B^\frac{1}{2}$ are orthonormal matrix.

For the second term, we know 
$\norm{\tilde{\Vc}\trans \tilde{\W}_0} \sim O(\sqrt{d} + \sqrt{k})$ with high probability, and by 
equation 3.2 in \cite{rudelson2010non}
we know $\sigma_k(\tilde{\V}\trans \tilde{\W}_0) \ge \frac{\eta}{\sqrt{k}}$ with probability at least $1-\eta$, which finishes the proof.
\end{proof}

\subsection{CCA Setting}
Since our approach to CCA directly calls Algorithm \ref{algo:appro-ortho} for solving generalized eigenvalue problem as subroutine, most of the theoretical property should be clear other than random projection step in Algorithm \ref{algo:appro-cca}. Here, we give following lemma. The proof of Theorem \ref{thm:main_cca} easily follow from the combination of this lemma and Theorem \ref{thm:main_k}.

\begin{lemma}\label{lem:ccatoindividual}
If the $\left(\begin{array}{c c}\bar{\W}_x \\ \bar{\W}_y\end{array}\right)$ as constructed in Algorithm~\ref{algo:appro-cca} has angle at most $\theta$ with the true top-$2k$ generalized eigenspace of $\A,\B$, then with probability $1-\zeta$, both $\W_x$, $\W_y$ has angle at most $O(k^2\theta/\zeta^2)$ with the true top-$k$ canonical space of $\X, \Y$. 
\end{lemma}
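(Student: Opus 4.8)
The plan is to reduce the statement to the standard lower bound on the smallest singular value of a square Gaussian matrix --- exactly the estimate already used in Lemma~\ref{lem:randinit}. First fix a convenient basis. As established above, the top-$2k$ generalized eigenspace of $(\A,\B)$ is $\mathrm{span}\{(\bphi_i;\mathbf 0),(\mathbf 0;\bpsi_i):i\in[k]\}$, hence it is the range of $\V\defeq\left(\begin{smallmatrix}\V_x&0\\0&\V_y\end{smallmatrix}\right)$ where $\V_x=[\bphi_1,\dots,\bphi_k]$ is $\S_{xx}$-orthonormal and $\V_y=[\bpsi_1,\dots,\bpsi_k]$ is $\S_{yy}$-orthonormal, and one checks $\V\trans\B\V=\I_{2k}$. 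Let $\V_{x,\perp}$ and $\V_{y,\perp}$ be $\S_{xx}$- resp.\ $\S_{yy}$-orthonormal bases of the orthogonal complements of $\mathrm{span}(\V_x)$ and $\mathrm{span}(\V_y)$, and set $\Vc\defeq\left(\begin{smallmatrix}\V_{x,\perp}&0\\0&\V_{y,\perp}\end{smallmatrix}\right)$, which is a $\B$-orthonormal basis of the $\B$-orthogonal complement of $\mathrm{range}(\V)$. The recovered $\bar\W\defeq\left(\begin{smallmatrix}\bar\W_x\\\bar\W_y\end{smallmatrix}\right)$ from \geneigk\ is $\B$-orthonormal, so we may write $\bar\W=\V\C+\Vc\S$ with $\C\defeq\V\trans\B\bar\W$ and $\S\defeq\Vc\trans\B\bar\W$; then $\C\trans\C+\S\trans\S=\I_{2k}$, and by the lemma relating $\sin\theta$ to $\norm{\X_\perp\trans\B\Y}$ we have $\norm{\S}=\sin\theta(\bar\W,\V)\le\theta$. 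Reading off the first block of rows, $\bar\W_x=\V_x\mat{P}_x+\V_{x,\perp}\mat{Q}_x$ with $\mat{P}_x\defeq\V_x\trans\S_{xx}\bar\W_x\in\R^{k\times2k}$ (the top $k$ rows of $\C$) and $\mat{Q}_x\defeq\V_{x,\perp}\trans\S_{xx}\bar\W_x$ (the $x$-block of $\S$), and symmetrically for $y$.

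Two structural facts drive the argument. \emph{(i) $\norm{\mat{Q}_x}\le\theta$}, because $\mat{Q}_x$ is a submatrix of $\S$. \emph{(ii) $\mat{P}_x$ is a near-co-isometry: $\sigma_{\min}(\mat{P}_x)\ge\sqrt{1-\theta^2}$.} Indeed $\mat{P}_x\mat{P}_x\trans$ is the leading $k\times k$ principal block of $\C\C\trans$; since $\C$ is square with $\C\trans\C=\I-\S\trans\S\succeq(1-\theta^2)\I_{2k}$, also $\C\C\trans\succeq(1-\theta^2)\I_{2k}$, so its principal block satisfies $\mat{P}_x\mat{P}_x\trans\succeq(1-\theta^2)\I_k$. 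Fact (ii) is the point: $\B$-orthonormality of $\bar\W$ forces $\V_x\trans\S_{xx}\bar\W_x$ to have all singular values within $O(\theta^2)$ of $1$, which is what stops the random projection from collapsing the $x$-component.

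Now $\tilde\W_x=\bar\W_x\U=\V_x(\mat{P}_x\U)+\V_{x,\perp}(\mat{Q}_x\U)$; the Gram--Schmidt step preserves the column span, and the $\tan\theta$ identity (invariant under right-multiplication of the second argument by an invertible matrix) gives
\begin{align*}
\tan\theta_{\S_{xx}}(\W_x,\V_x)=\norm{\mat{Q}_x\U\,(\mat{P}_x\U)^{-1}}\le\frac{\norm{\mat{Q}_x}\,\norm{\U}}{\sigma_{\min}(\mat{P}_x\U)}\le\frac{\theta\,\norm{\U}}{\sqrt{1-\theta^2}\;\sigma_{\min}(\mat{N}_x\trans\U)},
\end{align*}
where I wrote a thin SVD $\mat{P}_x=\mat{L}_x\Sigma_x\mat{N}_x\trans$ ($\mat{L}_x$ orthogonal $k\times k$, $\Sigma_x$ diagonal with entries $\ge\sqrt{1-\theta^2}$, $\mat{N}_x\in\R^{2k\times k}$ with orthonormal columns) and used $\sigma_{\min}(\mat{P}_x\U)\ge\sigma_{\min}(\Sigma_x)\,\sigma_{\min}(\mat{N}_x\trans\U)$. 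Since $\mat{N}_x$ has orthonormal columns and is independent of $\U$, rotational invariance makes $\mat{N}_x\trans\U$ a $k\times k$ matrix with i.i.d.\ standard Gaussian entries, so by the estimate of \cite{rudelson2010non} used in Lemma~\ref{lem:randinit}, $\sigma_{\min}(\mat{N}_x\trans\U)\ge\zeta/(3\sqrt k)$ with probability $\ge1-\zeta/3$, and likewise for the $y$-block. Also $\norm{\U}\le\norm{\U}_F=O(k/\sqrt\zeta)$ with probability $\ge1-\zeta/3$ by Markov applied to $\norm{\U}_F^2\sim\chi^2_{2k^2}$ (mean $2k^2$). On the intersection of these three events, of probability $\ge1-\zeta$, we obtain $\tan\theta_{\S_{xx}}(\W_x,\V_x)=O(k^{3/2}\theta/\zeta^{3/2})=O(k^2\theta/\zeta^2)$ and the identical bound for $\W_y$; since $\sin\le\tan$ this is the claim (if the bound exceeds $1$ it is vacuous, so assume $\theta$ small enough that $\sqrt{1-\theta^2}\ge1/2$).

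The main obstacle is fact (ii) together with the Gaussian rotational-invariance observation: seeing that $\B$-orthonormality of $\bar\W$ alone pins the singular values of $\V_x\trans\S_{xx}\bar\W_x$ near $1$, so that after the random projection $\mat{P}_x\U$ behaves like a genuine square Gaussian and its smallest singular value enjoys the usual $\Omega(\zeta/\sqrt k)$ bound. Everything else is routine linear algebra, the standard Gaussian anti-concentration estimate, and bookkeeping of the failure probability across the $x$-block, $y$-block, and shared-$\norm{\U}$ events.
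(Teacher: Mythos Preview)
Your proof is correct and takes a genuinely different route from the paper's. The paper works in the eigenvector basis $\frac{1}{\sqrt{2}}\left(\begin{smallmatrix}\Phi & -\Phi\\ \Psi & \Psi\end{smallmatrix}\right)$, so that after projecting to the $y$-block and absorbing the orthogonal alignment $\mathbf R$ into $\U$, the main term is $\frac{1}{\sqrt{2}}\S_{yy}^{1/2}\Psi(\U_1+\U_2)$; it then controls the orthonormalization step via a perturbation argument for the inverse square root $\V^{-1/2}$, which is where the extra powers of $k$ and $1/\zeta$ enter. You instead choose the block-diagonal $\B$-orthonormal basis $\V=\left(\begin{smallmatrix}\V_x & 0\\ 0 & \V_y\end{smallmatrix}\right)$ for the same $2k$-dimensional space; this makes the $x$-block of $\C=\V\trans\B\bar\W$ exactly $\mat P_x=\V_x\trans\S_{xx}\bar\W_x$, and your fact~(ii) --- that $\B$-orthonormality of $\bar\W$ forces $\mat P_x\mat P_x\trans\succeq(1-\theta^2)\I_k$ --- is the clean structural replacement for the paper's perturbation step. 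From there your $\tan\theta$ computation and the SVD reduction to a genuine $k\times k$ Gaussian $\mat N_x\trans\U$ are more direct than the paper's $\V^{-1/2}$ analysis, and in fact yield the sharper $O(k^{3/2}\theta/\zeta^{3/2})$ before you relax it to match the stated $O(k^2\theta/\zeta^2)$. Both arguments ultimately rest on the same Rudelson--Vershynin lower bound for the smallest singular value of a square Gaussian, so the difference is in the linear-algebraic packaging rather than the probabilistic core.
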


\begin{proof}
We will prove this for $\W_y$, the proof for $\W_x$ follows directly from same strategy.

Recall $\B = \left(\begin{array}{cc}
\S_{xx} & 0 \\ 0 & \S_{yy}
\end{array}\right)$. Let $\Phi \in \R^{d_1\times k}$ be the true top $k$ subspace of $\X$ and $\Psi \in \R^{d_2\times k}$ be the true top $k$ subspace of $\Y$.Then by construction we know the top $2k$ subspace should be $\frac{1}{\sqrt{2}}\left(\begin{array}{cc} \Phi & -\Phi \\ \Psi & \Psi\end{array}\right)$.

\newcommand{\RR}{\mathbf{R}}
\newcommand{\EE}{\mathbf{E}}
By properties of principal angle, we know there exists an orthonormal matrix $\RR\in \R^{2k\times 2k}$ such that 
$$
\|\frac{1}{\sqrt{2}}\B^{1/2} \left(\begin{array}{cc} \Phi & -\Phi \\ \Psi & \Psi\end{array}\right)\RR - \B^{1/2} \left(\begin{array}{c} \bar{\W}_x \\ \bar{\W}_y\end{array}\right)\| \le 2\sin \frac{\theta}{2}.
$$

In particular, if we only look at the last $d_2$ rows, we have
$$\|\frac{1}{\sqrt{2}}\S_{yy}^{1/2} \left(\begin{array}{cc} \Psi & \Psi\end{array}\right)\RR - \S_{yy}^{1/2} \bar{\W}_y\| \le 2\sin \frac{\theta}{2}.
$$

Let $\U$ be the random Gaussian projection we used, and let $\RR\U = \left(\begin{array}{c} \U_1 \\ \U_2\end{array}\right)$, we know

\begin{align*}
\S_{yy}^{1/2} \bar{\W}_y \U & =  \frac{1}{\sqrt{2}}\S_{yy}^{1/2} \left(\begin{array}{cc} \Psi & \Psi\end{array}\right)\left(\begin{array}{c} \U_1 \\ \U_2\end{array}\right) + \EE \\ &= \frac{1}{\sqrt{2}}\S_{yy}^{1/2} \Psi(\U_1+\U_2) + \EE,
\end{align*}
where $\EE$ is the error (after multipled by random matrix $\U$), with $\|\EE\|\le O(2\sqrt{k}\sin \frac{\theta}{2}) \le O(\sqrt{k}\theta)$.

Let $\V = (\S_{yy}^{1/2} \bar{\W}_y \U)^\top \S_{yy}^{1/2} \bar{\W}_y \U$, the orthonormalization step gives a matrix $\W_y$ that is equivalent (up to rotation) to $\bar{\W}_y \U \V^{-1/2}$. Our goal is to show $\V^{-1/2} \approx ((\U_1+\U_2)^\top(\U_1+\U_2))^{-1/2}$ so we get roughly $\Psi$. 

Note that $\Psi^\top \S_{yy} \Psi = \I$, therefore $\V = \frac{1}{2}(\U_1+\U_2)^\top(\U_1+\U_2) + \EE'$ where the error $\EE' = (\frac{1}{\sqrt{2}}\S_{yy}^{1/2}\Psi(\U_1+\U_2))^\top \EE + \frac{1}{\sqrt{2}}\EE^\top (\S_{yy}^{1/2}\Psi(\U_1+\U_2)) + \EE^\top \EE)$. We know with high probability $\|\U_1+\U_2\| \le O(\sqrt{k})$, with probability at least $1-\zeta$, $\sigma_{min}(\U_1+\U_2) \ge \Omega(\zeta/\sqrt{k})$.
Therefore we know $\sigma_{min}[(\U_1+\U_2)\trans(\U_1+\U_2)] \ge \Omega(\zeta^2/k)$ and $\|\EE'\|\le O(k\theta)$. By matrix perturbation for inverse we know $\|\V^{-1/2} - \sqrt{2}((\U_1+\U_2)^\top(\U_1+\U_2))^{-1/2}\| \le O(k^2\theta/\zeta^2)$. Since $(\U_1+\U_2) ((\U_1+\U_2)^\top(\U_1+\U_2))^{-1/2} = \RR'$ is an orthonormal matrix, we know there's some orthonormal matrix $\RR''$ so that:
\begin{align*}
&\norm{\S_{yy}^{1/2}\W_y - \S_{yy}^{1/2} \Psi \RR''} = 
\norm{\S_{yy}^{1/2} \bar{\W}_y\U\V^{-1/2} - \S_{yy}^{1/2} \Psi \RR'} \\
\le &\norm{\S_{yy}^{1/2} \bar{\W}_y\U\V^{-1/2} - \sqrt{2}\S_{yy}^{1/2} \bar{\W}_y\U
((\U_1+\U_2)^\top(\U_1+\U_2))^{-1/2}} \\
&+ \norm{\sqrt{2}\S_{yy}^{1/2} \bar{\W}_y\U
((\U_1+\U_2)^\top(\U_1+\U_2))^{-1/2} - \S_{yy}^{1/2} \Psi \RR'} 
\le  O(k^2\theta/\zeta^2)
\end{align*}


Therefore the angle between the $\W_y$ and the truth $\Psi$ is bounded by $O(k^2\theta/\zeta^2)$.

\end{proof}

\end{document}